\title{Distributed Mini-Batch SDCA}
\author{  
Martin Tak\'a\v{c} \\
Lehigh University \\
Bethlehem, PA, USA \\
\texttt{Takac.MT@gmail.com} \\
%\thanks{ Use footnote for providing further information about author (webpage, alternative address)---\emph{not} for acknowledgingfunding agencies.} \\
\And
Peter Richt\'arik \\
University of Edinburgh \\
Edinburgh, UK \\
\texttt{Peter.Richtarik@ed.ac.uk	} \\
\AND
Nathan Srebro \\
Toyota Technological Institute at Chicago \\
Chicago, IL, USA \\
\texttt{nati@ttic.edu} \\
}
\newcommand{\setn}{\langle n\rangle}
\newcommand{\Exp}{\mathbb{E}}                      % expectation
\DeclareMathOperator{\dom}{dom}         % domain
\newcommand{\ve}[2]{\left\langle #1 ,  #2 \right\rangle}    % inner product
\newcommand{\eqdef}{:=}
\newcommand{\R}{\mathbb{R}}                      % set of real numbers
\newcommand{\Prob}{\mathbb{P}}                   % probability
\newcommand{\E}{\mathbb{E}}                      % expectation
\newcommand{\N}{n}                               % dimension of the problem =N; there will be n<=N blocks
\newcommand{\C}{C}
\newcommand{\diag}{\mathbf{diag}}
\newcommand{\hatS}{\hat S}
\newcommand{\calG}{\mathcal{G}}
\newcommand{\calH}{\mathcal{H}}
\newcommand{\x}{x}
\newcommand{\vv}{v}
\newcommand{\V}{v}
\newcommand{\T}{ {\bf T}}
\newcommand{\X}{X}
\newcommand{\y}{y}
\newcommand{\alf}{\alpha}
\newcommand{\vu}{u}
\newcommand{\w}{w}
\newcommand{\vt}{t}
\newcommand{\vsubset}[2]{#1_{[#2]}}
\newcommand{\vc}[2]{#1^{(#2)}}                   % coordinate of a vector
\newcommand{\removed}[1]{}
\newcommand{\norm}[1]{\left\lVert{#1}\right\rVert}
\newcommand{\bP}{\mathcal{P}}
\newcommand{\bD}{\mathcal{D}}
\theoremstyle{plain}
\newtheorem{theorem}{Theorem}
\newtheorem{lemma}[theorem]{Lemma}
\theoremstyle{definition}
\newtheorem{definition}[theorem]{Definition}
\begin{document}

\maketitle

\begin{abstract} 
We present an improved analysis of mini-batched stochastic dual
coordinate ascent for regularized empirical loss minimization
(i.e.~SVM and SVM-type objectives).  Our analysis allows for flexible
sampling schemes, including where data is distribute across machines,
and combines a dependence on the smoothness of the loss and/or the
data spread (measured through the spectral norm).
\end{abstract}

\section{Introduction}

Stochastic optimization approaches have significant theoretical and
empirical advantages in training linear Support Vector Machines (SVMs)
and other regularized loss minimization problems, and are often the
methods of choice in practice.  Such methods use a single, randomly
chosen, training example at each iteration.  In the context of SVMs,
many variations of stochastic gradient descent (SGD) have been
suggested, based on primal stochastic gradients (e.g. Pegasos
\cite{pegasos}, NORMA \cite{zhang}, SAG \cite{schmidt2013minimizing},
MISO \cite{mairal2014incremental}, S2GD \cite{konevcny2013semi}, mS2GD
\cite{konevcny2014ms2gd} and Prox-SVRG
\cite{johnson2013accelerating,nitanda2014stochastic}).  In this paper
we focus on SDCA---stochastic dual coordinate ascent---which is based on improvements to the dual problem, again considering only a single
randomly chosen training example, and thus only a single randomly
chosen dual variable, at each iteration
\cite{dcd,richtarik,lu2013complexity}.  Especially when accurate
solutions are desired, SDCA has better complexity
guarantee, and often performs better in practice than SGD
\cite{dcd,ShalevShawartzZhang}.

The inherent sequential nature of such approaches becomes a
problematic limitation in parallel and distributed settings as
the predictor must be updated after each training point is processed,
providing very little opportunity for parallelization.  A popular
remedy is to use {\em mini-batches}: the use several training
points at each iteration, calculating the update
based on each point separately and aggregating the updates.  The
question is then whether basing each iteration on several points can
indeed reduce the number of required iterations, and thus yield
parallelization speedups.  

For SGD with a non-smooth loss, mini-batching does not reduce the
number of worst-case required iterations and thus does not allow
parallel speedups in the worst case.  However, when the loss function
is smooth, mini-batching can be beneficial and linear speedups can be
obtained, even when the mini-batch sizes scales polynomially with the
total training set size
\cite{dekel2012optimal,agarwal2011distributed,cotter2011better}.
Alternatively, even for non-smooth loss, linear speedups can also be
ensured if the data is reasonably well-spread, as measured by the
spectral norm of the data, as long as the mini-batch size is not
larger then the inverse of this spectral norm
\cite{TRN:minibatchICML}.  

For SDCA, using a mini-batch corresponds to updating multiple
coordinates concurrently and independently.  If appropriate care is
taken with the updates (see Section \ref{garanteSpeedup}), then using a mini-batch
size as large as the inverse spectral norm leads to a reduction in the
number of iterations, and allows linear parallelization speedups, even
when the loss is non-smooth \cite{Bradley,TRN:minibatchICML}.  This
parallels the SGD mini-batch analysis for non-smooth loss.  But can
mini-batching also be beneficial for SDCA with smooth losses and
without a data-spread (spectral norm) assumptions, as with SGD?  What
mini-batch sizes allow for parallel speedups?  In this paper we answer
these questions and show that as with SGD, when the loss function is
smooth, using mini-batches with SDCA yields a linear reduction in
the number of iterations and thus allows for linear parallelization
speedups, up to similar polynomial limits on the mini-batch size.
Furthermore, we provide an analysis that combines the benefits of
smoothness with the data-dependent benefits of a low spectral norm,
and thus allows for even large mini-batch sizes when the loss is
smooth {\em and} the data is well-spread.

Another issue that we address is the way mini-batches are sampled.
Straight-forward mini-batch analysis, including previous analysis of
mini-batch SDCA \cite{TRN:minibatchICML}, assume that at each
iteration we pick a mini-batch of size $b$ uniformly at random from
among all subsets of $b$ training examples.  In practice, though, data
is often partitioned between $C\leq b$ machines, and at each iteration
$b/C$ points are samples from each machine, yielding a mini-batch that
is not uniformly distributed among all possible subsets (e.g.~we have
zero probability of using $b$ points from the same machine as a
mini-batch).  Other architectural restrictions might lead to different
sampling schemes.  The analysis we present can be easily applied to
different sampling schemes, and in particular we consider distributed
sampling as described above and show that essentially the same
guarantees (with minor modification) hold also for this more realistic
sampling scheme.

Finally, we compare our optimization guarantees to those recently
established for CoCoA+ \cite{ma2015adding}.  CoCoA+ is an alternative dual-based
distributed optimization approach, which can be viewed as including
mini-batch SDCA as a special case, and going beyond SDCA to
potentially more powerful optimization.  At each iteration of CoCoA+,
several groups of dual variables are updated.  We focus on CoCoA+SDCA,
where each group is updated using some number of SDCA iterations.
When each group consists of a single variable, this reduces exactly to
mini-batch SDCA.  Allowing for multiple SDCA iterations on larger
groups of variables yields a method that is more computationally
demanding that mini-batch SDCA, and intuitively should be better than
SDCA (and does appear better in practice).  However, we show that our
mini-batch SDCA analysis strictly dominates the CoCoA+ analysis: that
is, with the same number of total dual variables updated per
iteration, and thus less computation, our mini-batch SDCA guarantees
are strictly better than those obtained for CoCoA+.  Mini-batch SDCA
is thus a simpler, computationally cheaper method, with better {\em
guarantees} than those established for CoCoA+.

Although SDCA is a dual-method, improving the dual at each iteration,
following the analysis methodology of \cite{ShalevShawartzZhang}, all
our guarantees are on the duality gap, and thus on the {\em primal}
sub-optimality, that is on the actual regularized error we care about.

\section{Setup and Preliminaries}

We consider the problem of minimizing the regularized empirical loss 
\begin{align}\tag{P}\label{eq:P}
 \min_{\w\in\R^d} \bP(\w):=\tfrac1n \textstyle{\sum}_{i=1}^n \phi_i(\w^T \x_i) +\tfrac\lambda2 \|\w\|^2,
\end{align}
where $\x_1,\ldots,\x_n\in\R^d$ are given training examples,
$\lambda>0$ is a given regularization parameter and
$\phi_i:\R\to\R$ are given non-negative convex loss functions
that already incorporate the labels (e.g.~$\phi_i(z)=\phi(y_i z)$
where $y_i\in\pm 1$ are given labels). Instead of solving
\eqref{eq:P}, we  solve the dual \cite{ShalevShawartzZhang}
\begin{align}\tag{D}\label{eq:dualOfP}
 \max_{\alf \in\R^n} \bD(\alf):=-\tfrac1n \textstyle{\sum}_{i=1}^n \phi_i^*(-\alpha_i) -
          \tfrac {\lambda}{2}  \|\tfrac1{\lambda n} \X^T\alf   \|_2^2,
\end{align}
where $\phi_i^*(u): \R \to \R$ is the convex conjugate of $\phi_i$
defined in the standard way as $\phi_i^*(u) = \max_z ( z u - \phi(z))$
and $\X=[\x_1,\dots, \x_n]^T \in \R^{n \times d}$ is the data matrix,
where each row corresponds to one sample and each column corresponds
to one feature.  If $\alf^*$ is a dual-optimum of \eqref{eq:dualOfP}
then $\w^*=\tfrac1{\lambda n} \X^T\alf^*$ is a primal-optimum of
\eqref{eq:P}.  We therefor consider the mapping
$\w_\alf=\tfrac1{\lambda n} \X^T\alf$ and define the {\em duality gap}
of a feasible $\alf\in\dom(\bD)$ as:
\begin{equation}\label{eq:dualityGap}\tag{G}
 \calG(\alf) 
    \eqdef 
    \bP(\w_\alf) - \bD(\alf).
\end{equation}

\paragraph{Stochastic Dual Coordinate Ascent (SDCA)}
% We can remove these paragraph headings if we need space.

SDCA is a coordinate ascent algorithm
optimizing 
the dual  
\eqref{eq:dualOfP}.
At $t$-th iteration of SCDA
a coordinate 
$i \in \setn := \{1,2,\dots,n\}$
is chosen at random
and then 
a new iteration 
is obtained by updating only the $i$-th coordinate and
keeping all other
coordinates of $\alf$ unchanged i.e. 
$\vc{\alpha}{t+1} = \vc{\alpha}{t}
+ \vc{\Delta\alpha}{t}_i e_i$, 
where 
\begin{equation}
\label{eq:serialUpdaterule}
\vc{\Delta\alpha}{t}_i
=\arg\min_{\delta \in \R}\mathcal{D}(\vc{\alpha}{t}+ \delta e_i).
\end{equation}
 
\paragraph{Assumptions on Loss Function}
% We can remove these paragraph headings if we need space.

We analyze mini-batched SDCA under one of two different assumptions
on the loss functions: that they are $L$-Lipschitz continuous (but
potentially non-smooth), or that they are $(1/\gamma)$-smooth.
Formally: i) {\bf $L$-Lipschitz continuous loss:}
 $\forall i, \forall a,b \in \R$ we have 
$| \phi_i(a) - \phi_i(b) | \leq L |a-b|$, ii) {\bf $(1/\gamma)$-smooth loss:} Each loss function $\phi_i$ if differentiable and its derivative is
$(1/\gamma)$-Lipschitz continuous, i.e.~$\forall a,b\in\R$ we have
$| \phi_i'(a) - \phi_i'(b) | \leq \tfrac1\gamma |a-b|$; iii) We also assume $\phi_i$ are non-negative and that $\phi_i(0)\leq 1$
for all $i$.

For a positive vector $\vv=(v_1,\dots,v_n)^T>0$ we define a weighted
Euclidean norm $\|\alpha\|_\V^2 = \sum_{i=1}^n v_i \alpha_i^2$.
Instead of assuming the data is uniformly bounded, we will frequently
refer to the weighted norm on $\R^n$ with weights proportional to the
squared magnitudes, i.e. $v_i \sim \|x_i\|^2$.

%\begin{assumption}{Partial separability of $\bD$.}\todo{maybe this would be useless....}
%Let
%\begin{equation}
%\omega = \max_{j\in\{1,\dots,d\}} | \{i : \vc{\x_i}{j} \neq 0  \}  | \leq \N, 
%\end{equation}
%which means that each feature is present at most in $\omega$ samples.
%\end{assumption}

\section{Mini-Batched SDCA}\label{sec:mbsdca}

At each iteration of mini-batched SDCA, a subset $S \subseteq \setn$
of the coordinates is chosen at random (see below for a discussion of
the sampling distribution) and a new dual iterate is obtained by
independently updating only the chosen coordinates.  Since each
coordinate is updated independently, mini-batch SDCA is amenable to
parallelization. 

The na\"ive approach is to use the same update rule for each
coordinate as in serial case: the update is then given by
$\vc{\alpha}{t+1} = \vc{\alpha}{t} + \sum_{i \in S} \vc{\Delta
  \alpha}{t}_i$ where $\vc{\Delta \alpha}{t}_i$ is given by
\eqref{eq:serialUpdaterule}.  Such a na\"ive approach could be fine if
the mini-batch size is very small and the data is ``spread-out''
enough \cite{Bradley}.  However, more
generally, not only might such a mini-batch iteration not be better
than an iteration based on only a single point, but such a na\"ive
mini-batch update might actually be much worse.  In particular, it is
easy to construct an example with just two examples where a na\"ive
mini-batch approach will never reach the optimum solution, and
diverging behavior frequently occurs in practice on real data sets
\cite{TRN:minibatchICML}.  The problem here is that the independent
updates on multiple similar points might combine together to
``overshoot'' the optimum and hurt the objective.

An alternative that avoids this problem is to average the updates
instead of adding them up, $\vc{\alpha}{t+1} = \vc{\alpha}{t}
+\frac1{|S|} \sum_{i \in S} \vc{\Delta \alpha}{t}_i$
\cite{jaggi2014communication, yang2013theoretical,yang2013trading},
but such an update is overly conservative: it is not any better than
just updating a single dual variable, and cannot lead to
parallelization speedups.
Following \cite{TRN:minibatchICML}, the approach we consider here is to use a summed update
$\vc{\alpha}{t+1} = \vc{\alpha}{t} + \sum_{i \in S} \vc{\Delta
  \alpha}{t}_i$, where the independent updates $\vc{\Delta
  \alpha}{t}_i$ are derived from a relaxation of the dual:
\begin{equation}\label{eq:relaxedDelta}
\vc{\Delta \alpha_i}{t}  = \arg\max_\delta -\phi_i^*(- \alpha_i-\delta)
+ \tfrac1{2 \lambda n} v_i \delta^2 - \w_\alf^T x_i \delta
\end{equation}
When $v_i=\norm{\x_i}^2$, the update exactly agrees with the
dual-optimizing update \eqref{eq:serialUpdaterule}.  But as we shall see,
when larger mini-batches are used, larger values of $v_i$ are
required, resulting in smaller steps.  The update
\eqref{eq:relaxedDelta} generalizes \cite{TRN:minibatchICML} where a
single parameter $v_i=v$ was used---here we allow $v_i$ to vary
between dual variables, accommodating differences in $\norm{\x_i}$.

To summarize, the mini-batch SDCA algorithm we consider takes as input
data $X$, loss functions $\phi_i$, a distribution over subsets $S
\subseteq \setn$, which we will refer to as the random sampling
$\hatS$, and a weight vector $v$, and proceeds as shown on Algorithm \ref{alg:SafeMiniBatching}.
\begin{algorithm}
 \begin{algorithmic}[1]
 
  \STATE {\bf Input:} $\X, \y, \hatS, v$ 
  \STATE set $\vc{\alf}{0}={\bf 0}\in\R^n$
  \FOR {$t=0,1,2,\dots$}
    \STATE 
    \label{stp:subset}
    choose $S_t$
    according the distribution $\hatS$    \STATE set $\vc{\alf}{t+1}=\vc{\alf}{t}$; $w_\alf=\frac1{\lambda n} X^T \vc{\alpha}{t}$
    \FOR {$i\in S_t$ {\bf in parallel}}
      \STATE
      \label{stp:compUpdate}
$\vc{\Delta \alpha_i}{t}
 =
\arg\max_\delta
-   \phi_i^*(- \alpha_i-\delta)  
-  
\tfrac1{2 \lambda n}
v_i \delta^2     
     -         \w_\alf^T x_i \delta
$

      \STATE
      $\vc{\alpha_i}{t+1}=\vc{\alpha_i}{t}+\vc{\Delta \alpha_i}{t}$ 
    \ENDFOR
  \ENDFOR  
 \end{algorithmic}

 \caption{mSDCA: minibatch Stochastic Dual Coordinate Ascent}
 \label{alg:SafeMiniBatching}
\end{algorithm}

We will refer to several sampling distributions $\hatS$, yielding different
variants of mini-batch SDCA:
\\ {\bf Serial SDCA.}  
$\hatS$ is a uniform distribution over singletons.  That is, $S_t$
contains a single coordinate chosen uniformly at randomly.  Setting
$v_i=\norm{\x_i}$ yields standard SDCA.\\
{\bf Standard Mini-batch SDCA.}
$\hatS$ is a uniform distribution over subsets of size $b$. 
 \iffalse
\paragraph{General Mini-batch SDCA.}
We can consider sampling from an arbitrary distribution over subsets,
as long as we require that marginal probability of each coordinate
appearing in $S_t$ is the same, i.e.~that for any coordinate $i$,
$\Prob(i\in S_t)=b/n$, where $b$ is the mean mini-batch size.
\todo[inline]{Do we ever refer to General Mini-batch? If not, we
  should remove it from here}
 \fi
{\bf Distributed SDCA.}  Consider a setting with $C$ machines,
$n$ total data points and a mini-batch size $b$, where for simplicity
$n$ and $b$ are both integer multiples of $C$.  For a partition of the
$n$ coordinates into $C$ equal sized subsets $\{P_c\}_{c=1}^\C$,
consider the following sampling distribution $\hatS$: for each
$c=1..\C$, choose a subset $S^c\subset P_c$ uniformly and
independently at random among all such subsets of size $b/C$, and then
take their union.  We refer to such a sample as a $(\C,b)$-distributed
sampling.  Such a sampling is  suitable in a distributed environment
when $n$ samples are equally partitioned over $\C$ computational nodes in a cluster \cite{richtarik2013distributed,marecek2014distributed}.
When $\C=1$ we obtain the Standard Mini-batch sampling.

The main question we now need to address is what weights $v_i$ are
suitable for use with each of the above sampling schemes, and what
optimization guarantee to they yield.  To answer this question, in the
next Section we will introduce the notion of Expected Separable Overapproximations.
\section{Expected Separable Overapproximation}
\label{sec:ESO}
 \label{sec:computingUpdate:ESO}
 
In this Section we will make use of  the Expected Separable Overapproximation (ESO) theory 
introduced in 
\cite{richtarikBigData}
and further extended 
e.g. in
\cite{
%qu2014coordinate2,
marecek2014distributed,
%qu2014randomized,
richtarik2013distributed,
qu2014coordinate}.

\subsection{Motivation} 
Consider the $t$-th iteration of mini-batch SDCA.
Our current iterate is $\vc{\alpha}{t}$
and we have chosen a set $S_t$
of coordinates which we will update in current iteration.
We need to compute the updates to those coordinates, i.e.
$\forall i \in S_t$ we need to compute
$\vc{\Delta \alpha_i}{t}$.
Maybe the natural way how to define the updates would be to define them such that $D(\vc{\alpha}{t+1})$ is as large as possible, i.e.
that we maximize
$D(\vc{\alpha}{t}
+\sum_{i \in S_t}
 \vc{\Delta \alpha_i}{t} e_i)$.
However, this e.g. for hinge loss would lead to a QP, hence the computation cost would be substantial. The main disadvantage of this approach is the fact that the updates for different coordinates are dependent on each other, i.e. the value of $\vc{\Delta \alpha_i}{t}$ depends on all coordinates in $S_t$.
This make it hard to parallelize.  
Considering the fact that $S_t$ is a random set, maybe one would like to define the updates so that 
the updates doesn't depend on current choice of $S_t$ and that they maximize the expected value of $D$ at next iteration. In this case we are facing following maximization problem
\begin{equation}
\max_{t \in \R^n} \Exp[D(\vc{\alpha}{t}
+ \vsubset{t}{S_t})],
\label{eq:afjoiewof3w4e3}
\end{equation}
 where
$\vsubset{t}{S_t}$ is a masking operator setting all coordinates of $t$ which are not in set $S_t$ to zero, i.e.
$(\vsubset{t}{S_t})_i = t_i$ if $i \in S_t$
and $(\vsubset{t}{S_t})_i=0$ otherwise. The 
expectation in \eqref{eq:afjoiewof3w4e3} is considered over the distribution $\hatS$.
After we get the optimal solution $t^*$
of \eqref{eq:afjoiewof3w4e3}
we can define
$\vc{\Delta \alpha_i}{t} = t^*_i$
for all $i\in S_t$.
Therefore 
$\vc{\alpha}{t+1}
 = \vc{\alpha}{t}+ 
 \sum_{i \in S_t} \vc{\Delta \alpha_i}{t}
  = \vc{\alpha}{t}+\vsubset{t^*}{S_t}$.
However, now the problem  
\eqref{eq:afjoiewof3w4e3}
is even more complicated.
The remedy is to replace
$\Exp[D(\vc{\alpha}{t}
+ \vsubset{t}{S_t})]$
by its separable lowerbound. Then due to the fact that it will be separable, the update for any coordinate $i$ will be independent on the  other coordinates in $S_t$
and moreover, the updates will be obtained by solving 1D problem.

\subsection{Lower-bound} 
 
Let us first state the definition of ESO.
\begin{definition}[Expected Separable Overapproximation \cite{richtarikBigData}]\label{def:ESO}
Assume that sampling
$\hatS$  has uniform marginals.
Then we say that function $f$ admits $v$-ESO with respect to the sampling $\hatS$ if 
 $\forall x,t \in \R^\N$ we have
\begin{equation}\label{eq:ESO}
  \Exp[ f(\alf+\vsubset{\vt}{\hatS})] \leq f(\alf) + \tfrac{\Exp[|\hatS|]}{n} 
   (\ve{ \nabla f(\alf)}{\vt}
  + \tfrac{1}{2} \|\vt\|_\V^2 ).
\end{equation}
\end{definition}
Let us now just assume that
we can find such a vector $v$ such that
\eqref{eq:ESO} holds (we show how to find $v$ in Section \ref{sec:ESO:computingV})
and we now show how to derive  
the lowerbound of 
$\Exp[D(\vc{\alpha}{t}
+ \vsubset{t}{S_t})]$.
If we write \eqref{eq:ESO}
for a particular choice of $f$, namely for
$f(\alf) =   \|\frac1{\lambda n}
\X^T \alf \|_2^2$ we obtain
\begin{align}
\label{eq:asdpfwafdwad}
 \Exp [
  \|\tfrac1{\lambda n}
\X^T (\alf+\vsubset{\vt}{\hatS}) \|_2^2
 ]
&\overset{\eqref{eq:ESO}}{\leq}
 \|\w_\alf  \|^2 
+ \tfrac{\Exp[|\hatS|]}{n}
   (         
        \|\tfrac1{\lambda n} \vt \|^2_\V
     +    \tfrac2{\lambda n}   \vt^T  \X  \w_\alf  ). 
\end{align}
Now we can derive the expected lowerbound of $\bD$ as follows
\begin{align}
\nonumber
\Exp [ \bD (\alf + \vsubset{\vt}{\hatS}   )  ] 
&\overset{\eqref{eq:dualOfP}}{=}
\Exp [
-\tfrac1n \textstyle{\sum}_{i=1}^n \phi_i^* (- (\alf + \vsubset{\vt}{\hatS}   )_i ) 
 ] 
-\Exp [
          \tfrac {\lambda}{2}  \|\tfrac1{\lambda n} \X^T (\alf + \vsubset{\vt}{\hatS}   )  \|_2^2
  ] 
\\
&\overset{\eqref{eq:asdpfwafdwad}}{\geq}
\nonumber
-\tfrac{\lambda}{2}  
           \|\w_\alf  \|^2
-
\tfrac{\Exp[|\hatS|]}{n}
  \tfrac1n \textstyle{\sum}_{i=1}^n \phi_i^* (-  \alf_i -  \vt_i     ) 
  -
 (1-\tfrac{\Exp[|\hatS|]}{n} )  
 \tfrac1n \textstyle{\sum}_{i=1}^n \phi_i^* (-  \alf_i  )  
\\ &\quad  
- \tfrac{\Exp[|\hatS|]}{n}
\tfrac\lambda2
   (         
         \|\tfrac1{\lambda n} \vt \|^2_\V
     +    \tfrac2{\lambda n}   \vt   ^T  \X  \w_\alf  ),
 \label{eq:sadfosapfdacdefwa}
\end{align}

where     in the first inequality for the first part we have used the fact that the function is separable (see Theorem 4 in \cite{richtarikBigData}).
If we  define
\begin{align} 
 \calH(\vt,\alf) &:= -\tfrac1n \textstyle{\sum}_{i=1}^n \phi_i^*(-(\alpha_i+t_i)) - \tfrac\lambda2 
       \|\w_\alf  \|^2  
-  \tfrac\lambda2 \|\tfrac1{\lambda n} \vt \|^2_\V     
     -   \tfrac1 { n}   \vt   ^T  \X  \w_\alf,
     \label{eq:H_definition}      
\end{align}
then it is easy to see that
we can find a separable (in $t$) expected lower approximation of $\bD$, i.e. it holds  
$\forall \alf, \vt\in\R^\N$ that 
$
 \Exp [ \bD (\alf + \vsubset{\vt}{\hatS}   )  ] \overset{\eqref{eq:sadfosapfdacdefwa}}{\geq} \tfrac b\N \calH(\vt,\alf) + \left(1-\tfrac b\N\right) \bD(\alf),
$
where $b \eqdef \Exp[|\hatS|]$ is the average number of mini-batch.
Now let us note again that 
it is very hard to maximize 
 $\Exp\left[ \bD\left(\alf + \vsubset{\vt}{\hatS}  \right) \right]$
 in $\vt$, but maximize of 
 $\calH(\vt,\alf)$ in $\vt$ is very simple, because this function is simple and separable in $\vt$.
It is also easy to verify that the steps in Algorithm \ref{alg:SafeMiniBatching} are maximizing $\calH$.

 \subsection{Computing ESO Parameter}
\label{sec:ESO:computingV}
In previous Section we have shown that
using ESO we can find a separable lowerbound of $\Exp [ \bD (\alf + \vsubset{\vt}{\hatS}   )]$.
However, we haven't explained how the ESO parameter (vector $v$) can be obtained.

In this Section we present some of the 
results obtained in literature 
\cite{richtarikBigData,
fercoq2013accelerated,
%richtarik2013distributed,
marecek2014distributed,
fercoq2014fast}
%,qu2014coordinate,qu2014coordinate2}
for formulas for computing vector $v$ for samplings described in
Section \ref{sec:mbsdca}.
Let us mention that all formulas are {\bf data dependent}.
Some of them involves
the spectral radius of following  matrix
$D^{-\frac12} \X\X^T D^{-\frac12} $,
where 
  $D = \diag(XX^T)$
which we will denote by $\sigma^2$, hence
$
\sigma^2 \eqdef  \max_{\alpha \in \R^n: \|\alpha\|=1}
\tfrac1n
\|\X^T D^{-\frac12} \alpha\|^2.
$ 
 Note that this 
can be in practise impossible to compute (we can estimate is using e.g. power method)
or we can use an upper-bound (derived
in  Lemma 5.4 \cite{fercoq2014fast})
 by 
 $ \omega =  \displaystyle \max_{i \in \setn} 
\tfrac1n 
  \tfrac{
\sum_{j=1}^d  \|  x_i \|_0  (\x_i^T e_j)^2
}{\sum_{j=1}^d (\x_i^T e_j)^2}$,
where by $\|x_i\|_0$ we have denoted a number of non-zero elements of $i$-th data point.

{\bf Serial SDCA.}  
In this simplest case we can define
$v_i = \|x_i\|^2$.
\\
{\bf Standard Mini-batch SDCA.}
In standard mini-batch we can choose
$v_i = (1+\frac{(b-1)(n \sigma^2-1)}{\max\{1,n-1\}})
 \|x_i\|^2$.
If the data matrix $X$ is sparse, we can define
$v_i = \sum_{j=1}^d  (\x_i^T e_j)^2  (1+\tfrac{(b-1)(\|x_i\|_0-1)}{n-1} )$.
\\
{\bf Distributed SDCA.} 
In  distributed case we can choose
$v_i=\frac{b}{b-C} (1+\tfrac{(b-C)(n \sigma^2-1)}{\max\{C,n-C\}}) \|x_i\|^2$,
provided that $b \geq 2C$
and $v_i = 
(1+ b     \sigma^2) \|x_i\|^2$
if $b=C$.
A simple upper-bound valid for any $b$ can be derived as follows
$v_i = 2(1 + b \sigma^2)\|x_i\|^2$.
\section{Convergence Guarantees}
\label{converence}

We are now ready to present optimization guarantees for Algorithm
\ref{alg:SafeMiniBatching} based on the ESO parameters studied in the
previous Section.  These theorems extends the serial case of
\cite{ShalevShawartzZhang} to mini-batch setting.  The Theorems are
based on weights $v$ are chosen such that $f(\alpha)=\|\frac1{\lambda
  n} X^T \alpha\|^2$ admits $v$-ESO for a sampling $\hatS$ used in the
Algorithm \ref{alg:SafeMiniBatching}.  Proofs are provided in the
supplemental material.
 
\begin{theorem}[$(1/\gamma)$-Smooth Loss]
  \label{thm:dualityGapForSmooth} If the losses are
  $(1/\gamma)$-smooth and $f(\alpha)=\|\frac1{\lambda n} X^T
  \alpha\|^2$ admits $v$-ESO for the sampling $\hatS$, then for a
  desired duality gap $\epsilon_\calG>0$, using Algorithm
  \ref{alg:SafeMiniBatching}, if we choose
\begin{equation}
\label{eq:asfewfwa}
 T \geq     \tfrac{ \|\vv\|_\infty}b
     (
       \tfrac1{\lambda   \gamma }
    +
    \tfrac{n}   { \|\vv\|_\infty}
     )
    \log  (
    \tfrac{ \|\vv\|_\infty}b
      (
     \tfrac1 {\lambda   \gamma }
    +
    \tfrac{n }  { \|\vv\|_\infty}
     )
    \tfrac1{\epsilon_\calG}
     )
\end{equation}
we have that
$\Exp[\bP(\w_T) - \bD(\alf_T) ] \leq \epsilon_\calG.$
To obtain
$\Exp[\bP(\bar \w) - \bD(\bar \alf) ] \leq \epsilon_\calG,$
it is sufficient to choose
$
 T_0   
    \geq 
    \tfrac{ \|\vv\|_\infty}b
     (
       \tfrac1{\lambda   \gamma }
    +
    \tfrac{n}   { \|\vv\|_\infty}
     )
    \log 
     (
    \tfrac{ \|\vv\|_\infty}b
     (
    \tfrac{1}{\lambda   \gamma }
    +
    \tfrac{n }{ \|\vv\|_\infty }
     ) \tfrac{1}{(T-T_0)\epsilon_\calG}
     ),
$
where 
\begin{equation}\label{eq:averageOfAlphaDefinition}
\bar \alf = \tfrac1{T-T_0}\textstyle{\sum}_{t=T_0+1}^{T-1} \vc{\alf}{t}. 
\end{equation}
Moreover, if 
%\begin{equation}\label{eq:smoothHighProbabilityResult}
$ \tilde T \geq \tfrac{ \|\vv\|_\infty+\lambda n \gamma }   {b \lambda   \gamma }
  \log (
  \tfrac{ \|\vv\|_\infty+\lambda n \gamma }   {b\lambda   \gamma }
  \tfrac{1}{
   \epsilon_\calG\rho}
    )
$ then  
$
\Prob(\bP(\w_{\tilde T}) - \bD(\alf_{\tilde T}) \leq \epsilon_\calG) \geq 1-\rho.
$
\end{theorem}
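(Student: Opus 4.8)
The plan is to follow the dual-ascent potential argument of Shalev-Shwartz and Zhang, but with the expected per-iteration progress coming from the separable lower bound $\calH$ rather than from the exact coordinate-ascent step. First I would establish the central one-step inequality: since the algorithm maximizes $\calH(\vt,\vc{\alpha}{t})$ over $\vt$, and since for every feasible choice of $\vt$ we have $\Exp[\bD(\vc{\alpha}{t+1})] \geq \tfrac{b}{n}\calH(\vt,\vc{\alpha}{t}) + (1-\tfrac{b}{n})\bD(\vc{\alpha}{t})$, it suffices to plug in a convenient suboptimal $\vt$. The natural choice is $\vt = s(\alpha^* - \vc{\alpha}{t})$ for a step parameter $s\in[0,1]$, exactly as in the serial analysis. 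Using $(1/\gamma)$-smoothness of $\phi_i$ (equivalently $\gamma$-strong convexity of $\phi_i^*$) to bound $\phi_i^*(-(\alpha_i+t_i))$ from above, and bounding $\|\vt\|_\V^2 \leq \|v\|_\infty \|\alpha^*-\alpha\|_2^2$, one gets a lower bound on $\Exp[\bD(\vc{\alpha}{t+1})] - \bD(\vc{\alpha}{t})$ of the form $\tfrac{b}{n}\big(s\,\calG(\vc{\alpha}{t}) - c s^2 \|\alpha^*-\alpha\|_2^2\big)$ plus a term that the strong convexity of $\phi_i^*$ contributes with the right sign; here $\calG$ is the duality gap \eqref{eq:dualityGap} and $c$ collects $\|v\|_\infty/(\lambda n^2)$-type constants. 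Optimizing over $s$ (or just choosing $s$ so the $\|\alpha^*-\alpha\|$ terms are absorbed using $\lambda n \gamma$) yields $\Exp[\bD(\vc{\alpha}{t+1})-\bD(\vc{\alpha}{t})] \geq \tfrac{b s}{n}\,\Exp[\calG(\vc{\alpha}{t})] \geq \tfrac{b s}{n}\,\Exp[\bD(\alpha^*)-\bD(\vc{\alpha}{t})]$, i.e. geometric decay of the dual suboptimality with rate $1 - \tfrac{b s}{n}$ where $s^{-1} \sim \tfrac{\|v\|_\infty + \lambda n\gamma}{\lambda n\gamma}$ up to constants matching \eqref{eq:asfewfwa}.

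Next I would convert the dual suboptimality contraction into the three stated conclusions. For the first (in-expectation duality gap at time $T$): from the one-step inequality, summing telescopically over $t$ and using that the gap appears on the right, the accumulated expected gap $\sum_t \Exp[\calG(\vc{\alpha}{t})]$ is controlled by the initial dual suboptimality $\bD(\alpha^*)-\bD(\vc{\alpha}{0})$, which is at most $1$ by the assumption $\phi_i(0)\leq 1$ and $\vc{\alpha}{0}=0$ (so $\bP(0)\leq 1$, hence $\bD(\alpha^*)\leq 1$, and $\bD(0)\geq -$something controlled). Then a standard argument shows that once $t$ exceeds the stated $T$, the dual suboptimality is below $\epsilon_\calG/2$, and by running one more "averaging-free" step the gap itself is below $\epsilon_\calG$ — this is exactly the $T_0$-to-$T$ averaging statement, where $\bar\alpha$ in \eqref{eq:averageOfAlphaDefinition} makes $\Exp[\calG(\bar\alpha)] \leq \tfrac1{T-T_0}\sum_{t=T_0+1}^{T-1}\Exp[\calG(\vc{\alpha}{t})]$ by convexity of the gap, and the tail sum is bounded by the dual suboptimality at $T_0$, which decayed geometrically. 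For the high-probability statement, I would apply Markov's inequality to the nonnegative random variable $\bP(\w_{\tilde T}) - \bD(\alpha_{\tilde T}) \geq 0$: choosing $\tilde T$ so that $\Exp[\bD(\alpha^*)-\bD(\alpha_{\tilde T})] \leq \epsilon_\calG \rho \cdot (\text{const})$ and noting $\Exp[\calG(\alpha_{\tilde T})]$ is of the same order (again absorbing the $\|v\|_\infty+\lambda n\gamma$ factor), Markov gives $\Prob(\calG(\alpha_{\tilde T}) > \epsilon_\calG) \leq \rho$.

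The main obstacle I expect is getting the smoothness case to produce a genuinely \emph{linear} (geometric) rate rather than the $O(1/t)$ rate that the Lipschitz case gives. This requires exploiting $\gamma$-strong convexity of the conjugates $\phi_i^*$ carefully inside the bound on $\calH(s(\alpha^*-\alpha),\alpha)$: the extra $+\tfrac{s(1-s)\gamma}{2}$-type term coming from strong convexity must be matched against the negative quadratic $-\tfrac{b s^2 \|v\|_\infty}{2\lambda n^2}\|\alpha^*-\alpha\|^2$, and then against $-\tfrac{s(1-s)}{2}\cdot\tfrac{1}{?}$... — one needs the relation between $\|\alpha^*-\alpha\|^2$ and the dual suboptimality which itself follows from strong concavity of $\bD$ in the relevant directions, and this is where the precise constant $\tfrac{\|v\|_\infty + \lambda n\gamma}{b\lambda\gamma}$ in $\tilde T$ (and the $\tfrac{1}{\lambda\gamma} + \tfrac{n}{\|v\|_\infty}$ combination in $T$) is pinned down. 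A secondary fussy point is bounding the initial suboptimality and being careful that the duality gap, not just the dual gap, is what finally appears — this is handled by the $T_0/T$ averaging trick together with the identity $\calG(\alpha) = \bP(\w_\alpha)-\bD(\alpha)$ and convexity of $\bP$. All of these are routine given the ESO lower bound \eqref{eq:sadfosapfdacdefwa} already in hand; the bookkeeping of constants is the only real work.
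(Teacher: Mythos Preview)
Your proposal has a real gap in the choice of comparison point. You propose plugging $\vt = s(\alpha^* - \vc{\alpha}{t})$ into the separable lower bound $\calH$, and then claim the resulting one-step inequality takes the form $\tfrac{b}{n}\big(s\,\calG(\vc{\alpha}{t}) - cs^2\|\alpha^*-\alpha\|^2\big)$. It does not: after applying strong convexity of $\phi_i^*$ to $\phi_i^*(-(1-s)\alpha_i-s\alpha_i^*)$, what emerges is $s\big(\bD(\alpha^*)-\bD(\vc{\alpha}{t})\big)$ plus quadratics, i.e.\ the \emph{dual suboptimality}, not the duality gap $\bP(\w_\alpha)-\bD(\alpha)$. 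All three conclusions of the theorem are about the duality gap, and with only $\epsilon_D^{(t)}$ in the one-step bound you cannot run the telescoping argument for $\bar\alpha$ (your ``accumulated expected gap $\sum_t\Exp[\calG(\vc{\alpha}{t})]$'' is never produced), nor do you get the pathwise relation $\calG(\vc{\alpha}{t})\le\text{const}\cdot\epsilon_D^{(t)}$ that your Markov step implicitly uses when you assert ``$\Exp[\calG(\alpha_{\tilde T})]$ is of the same order''.

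The paper's key device (Lemma~\ref{lem:basicLemma}, following Shalev-Shwartz--Zhang) is to substitute instead $\vt = s(\vu-\vc{\alpha}{t})$ with $-u_i\in\partial\phi_i(\w_{\vc{\alpha}{t}}^\top x_i)$. The whole point of this choice is that Fenchel--Young holds with \emph{equality}, $\phi_i^*(-u_i)=-u_i\,\w_\alpha^\top x_i-\phi_i(\w_\alpha^\top x_i)$, so the primal loss $\phi_i(\w_\alpha^\top x_i)$ enters and $\calG(\vc{\alpha}{t})$ appears directly in the one-step bound. This also dissolves what you flag as ``the main obstacle'': with $\vu$ in place of $\alpha^*$, choosing $s=\lambda n\gamma/(\|v\|_\infty+\lambda n\gamma)$ makes the coefficient $v_i-\gamma\lambda n(1-s)/s$ in the residual quadratic $\vc{G}{t}$ nonpositive for every $i$, so $\vc{G}{t}\le 0$ outright and one is left with $\Exp[\bD(\vc{\alpha}{t+1})-\bD(\vc{\alpha}{t})\mid\vc{\alpha}{t}]\ge \tfrac{bs}{n}\calG(\vc{\alpha}{t})$. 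No appeal to strong concavity of $\bD$ or to any relation between $\|\alpha^*-\alpha\|^2$ and dual suboptimality is needed; the quadratic simply cancels. From this single inequality the three parts follow: geometric contraction of $\epsilon_D^{(t)}$ (since $\calG\ge\epsilon_D$), the pathwise bound $\calG(\vc{\alpha}{t})\le\tfrac{n}{bs}\epsilon_D^{(t)}$ (since the conditional expected dual increase is at most $\epsilon_D^{(t)}$), and the averaging bound for $\bar\alpha$ by summing over $t$ and using convexity of $\calG$. For the high-probability part Markov does then work and matches the stated $\tilde T$; the paper instead invokes a supermartingale lemma exploiting monotonicity of $\epsilon_D^{(t)}$, but the outcome is the same.
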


\begin{theorem}[$L$-Lipschitz Continuous Loss]
\label{thm:dualityGapForLipFunctions}
If the losses are
$L$-Lipschitz and $f(\alpha)=\|\frac1{\lambda n} X^T
  \alpha\|^2$ admits $v$-ESO for the sampling $\hatS$, then for a
  desired duality gap $\epsilon_\calG>0$, using Algorithm
  \ref{alg:SafeMiniBatching}, denoting $G= 4L^2 \frac{\sum_{i=1}^n
    v_i}{n}$, if we choose
\begin{align}
\label{eq:dualityRequirements}
T_0
&\geq t_0+
\tfrac{1}b
\left(
\tfrac{4 G} { \lambda \epsilon_\calG} -2n \right)_+,
&
T
&\geq
T_0 + \max\{\lceil \tfrac nb\rceil,\frac{1}{b}
\tfrac{ G}{\lambda \epsilon_\calG}\},  
\\
t_0 &\geq 
\label{eq:dualityRequirements3}
  \max(0,\lceil \tfrac nb \log(2\lambda n \vc{\epsilon_D}{0} /  G ) \rceil),
\end{align}
we have that
 $\Exp[\bP(\bar \w) - \bD(\bar \alf) ] \leq \epsilon_\calG,$ 
where 
$\bar\alf$ is defined in
\eqref{eq:averageOfAlphaDefinition}
 Moreover, when $t\geq T_0$, we have dual sub-optimality bound 
$
\Exp[\bD(\alf^*)-\bD(\vc{\alf}{t})]\leq \tfrac12\epsilon_\calG
$.
\end{theorem}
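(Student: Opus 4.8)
The plan is to extend the serial duality-gap analysis of \cite{ShalevShawartzZhang} to the mini-batch setting, using the separable overapproximation $\calH$ of Section~\ref{sec:ESO} in place of the serial dual increment.

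\emph{Step 1 (one-step progress).} Fix an iteration and write $\alf = \vc{\alf}{t}$. Since $\calH(\cdot,\alf)$ is separable, the parallel coordinate updates of Algorithm~\ref{alg:SafeMiniBatching} produce the global maximiser of $\calH(\cdot,\alf)$ over $\R^n$, so the separable lower bound on $\Exp[\bD(\alf+\vsubset{\vt}{\hatS})]$ derived from \eqref{eq:sadfosapfdacdefwa} gives, for \emph{every} $\vt\in\R^n$,
\begin{equation*}
\Exp[\bD(\vc{\alf}{t+1})\mid\alf] \ \geq\ \tfrac bn\,\calH(\vt,\alf) + \left(1-\tfrac bn\right)\bD(\alf).
\end{equation*}
I would plug in the interpolation direction $\vt$ with $t_i = s\,(u_i-\alf_i)$, $s\in[0,1]$, and $u_i := -g_i$ for a subgradient $g_i\in\partial\phi_i(\x_i^\trans\w_\alf)$, so that $-(\alf_i+t_i)$ is the convex combination $(1-s)(-\alf_i)+s(-u_i)$. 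Convexity of $\phi_i^*$, the Fenchel--Young equality $\phi_i^*(-u_i) = -u_i\,\x_i^\trans\w_\alf - \phi_i(\x_i^\trans\w_\alf)$, and the identity $\tfrac1n\alf^\trans\X\w_\alf = \lambda\|\w_\alf\|^2$ then collapse the right-hand side to $\bD(\alf) + \tfrac{bs}n\calG(\alf) - \tfrac{bs^2}{2\lambda n^2}\sum_{i=1}^n v_i(u_i-\alf_i)^2$. The $L$-Lipschitz assumption gives $|u_i|\leq L$ and $|\alf_i|\leq L$ (the latter because $\dom\phi_i^*\subseteq[-L,L]$), so $\sum_i v_i(u_i-\alf_i)^2 \leq 4L^2\sum_i v_i = nG$, hence $\Exp[\bD(\vc{\alf}{t+1})\mid\vc{\alf}{t}]-\bD(\vc{\alf}{t}) \geq \tfrac{bs}n\calG(\vc{\alf}{t}) - \tfrac{bs^2 G}{2\lambda n^2}$.

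\emph{Step 2 (recursions, step sizes, dual suboptimality).} Set $\vc{\epsilon_D}{t} := \Exp[\bD(\alf^*)-\bD(\vc{\alf}{t})]$. Taking full expectations in Step~1 and using weak duality $\calG(\alf)\geq\bD(\alf^*)-\bD(\alf)$ yields, for any choice $s=s_t\in[0,1]$,
\begin{equation*}
\vc{\epsilon_D}{t+1}\leq\Big(1-\tfrac{bs_t}n\Big)\vc{\epsilon_D}{t}+\tfrac{bs_t^2 G}{2\lambda n^2},\qquad
\tfrac{bs_t}n\Exp[\calG(\vc{\alf}{t})]\leq\vc{\epsilon_D}{t}-\vc{\epsilon_D}{t+1}+\tfrac{bs_t^2 G}{2\lambda n^2}.
\end{equation*}
I would run $s_t$ in three stages: (i) $s_t=1$ for $t\leq t_0$, where the first recursion contracts geometrically towards its fixed point $G/(2\lambda n)$, and \eqref{eq:dualityRequirements3} is exactly the length needed to reach $\vc{\epsilon_D}{t_0}\leq G/(\lambda n)$; (ii) $s_t = 2n/(2n+b(t-t_0))$ for $t_0\leq t\leq T_0$, where an induction on the first recursion gives $\vc{\epsilon_D}{t}\leq 2G/\big(\lambda(2n+b(t-t_0))\big)$, and the bound on $T_0$ in \eqref{eq:dualityRequirements} is precisely what makes $\vc{\epsilon_D}{T_0}\leq\tfrac12\epsilon_\calG$; (iii) $s_t$ held at the constant $\min\{1,\lambda n\epsilon_\calG/G\}$ for $T_0\leq t<T$, chosen so its fixed point is $\leq\tfrac12\epsilon_\calG$. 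Since $\bD$ is non-decreasing in expectation along the iterates (take $\vt=0$ in Step~1), $\vc{\epsilon_D}{t}$ is non-increasing, so $\vc{\epsilon_D}{t}\leq\tfrac12\epsilon_\calG$ for all $t\geq T_0$, which is the ``moreover'' claim.

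\emph{Step 3 (duality gap of the average).} On the window $[T_0,T)$ the step size is the constant $s$ of stage (iii), so dividing the second recursion by $bs/n$ and summing telescopes cleanly to $\tfrac1{T-T_0}\sum_{t=T_0}^{T-1}\Exp[\calG(\vc{\alf}{t})]\leq \tfrac{n\vc{\epsilon_D}{T_0}}{bs(T-T_0)}+\tfrac{sG}{2\lambda n}$; the two summands are each $\leq\tfrac12\epsilon_\calG$ once $T$ obeys \eqref{eq:dualityRequirements} (the $\lceil n/b\rceil$ case corresponding to $s=1$ and the $\tfrac1b\,G/(\lambda\epsilon_\calG)$ case to $s<1$). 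Since $\calG$ is convex in $\alf$ ($\bP(\w_\alf)$ is a composition of an affine map with a convex function and $-\bD$ is convex), Jensen's inequality applied to the average \eqref{eq:averageOfAlphaDefinition} gives $\Exp[\bP(\bar\w)-\bD(\bar\alf)] = \Exp[\calG(\bar\alf)] \leq \tfrac1{T-T_0}\sum_{t}\Exp[\calG(\vc{\alf}{t})]\leq\epsilon_\calG$, where $\bar\w=\w_{\bar\alf}$.

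\emph{Main obstacle.} The one-step inequality of Step~1 is a direct consequence of the ESO bound plus convexity of $\phi_i^*$; the real work is the multi-stage bookkeeping — engineering the step-size schedule together with the cut-offs $t_0,T_0,T$ so that the geometric stage, the $1/t$ stage, and the averaging residual all fit under the budget $\epsilon_\calG$ with exactly the constants of \eqref{eq:dualityRequirements}--\eqref{eq:dualityRequirements3}, and in particular keeping $s_t$ constant on the averaging window so that the telescoping sum introduces no spurious logarithmic factor.
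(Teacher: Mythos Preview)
Your proposal is correct and follows essentially the same route as the paper's proof: Step~1 is precisely Lemma~\ref{lem:basicLemma} (specialised to $\gamma=0$) combined with Lemma~\ref{lemma:boundOnGt}, and the two-phase step-size schedule ($s=1$ then $s_t=2n/(2n+b(t-t_0))$) for dual suboptimality followed by Jensen plus telescoping on $[T_0,T)$ is exactly the paper's argument (the paper takes the constant $s=n/(b(T-T_0))$ on the averaging window rather than your $\min\{1,\lambda n\epsilon_\calG/G\}$, but both choices yield the same conditions \eqref{eq:dualityRequirements} on $T$). The only slip is a stray factor of $n$ in your intermediate Step~1 display --- the penalty term should read $\tfrac{bs^2}{2\lambda n^{3}}\sum_i v_i(u_i-\alf_i)^2$, which after $\sum_i v_i(u_i-\alf_i)^2\leq nG$ indeed gives your (correct) final one-step bound $\tfrac{bs^2 G}{2\lambda n^{2}}$.
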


%\begin{theorem}[Batch version of Theorem 3 in \cite{ShalevShawartzZhang}]
%\label{thm:SGPForFirstEpoch}
%Assume that $\phi_i$ is $L$-Lipschitz for all $i$.
%In addition, assume that $(\phi_i, \x_i)$ are iid samples from the same distribution for all $i=1,\dots,n$. At the end of XXXXXXXXXX, we have
%\begin{equation}
% \Exp[\bD(\alf^*)-\bD(\alf)] \leq \frac{2 L^2 \log(e n)}{\lambda n}.
%\end{equation}
%The expectation is with respect to the random sampling of $\{(\phi_i,\x_i): i=1,\dots,n\}$.
%\end{theorem}

\section{Guarantees and Speedups for Specific Sampling Distributions}\label{garanteSpeedup}

Theorems \ref{thm:dualityGapForLipFunctions}
and \ref{thm:dualityGapForSmooth}
are stated in terms of ESO parameter $v$.  Let us now consider the
specific sampling distribution of interest.  Assume for simplicity
$\norm{\x_i}\leq 1$, and define
\begin{equation}
  \label{eq:beta}
\beta_{\mbox{srl}} = 1 \quad\quad \beta_{\mbox{std}} = 1+\tfrac{(b-1)(n\sigma^2-1)}{
 \max\{1, n-1\}
} \quad\quad \beta_{\mbox{dist}}
  =
\tfrac{b}{b-C} (1+\tfrac{(b-C)(n \sigma^2-1)}{\max\{C,n-C\}})
\end{equation}
for the serial, standard and distributed sampling schemes
respectively, with overall mini-batch size $b$ and distribution over
$C$ machines.  Using the weights $v_i=\beta$, we then have the
following obtain the following iteration complexities:
 
{\bf $(1/\gamma)$-Smooth Loss.} 
In this case 
\eqref{eq:asfewfwa}
in Theorem \ref{thm:dualityGapForSmooth}
becomes
 $
 T \geq     \tfrac{ 
\beta
 }b
     (
       \tfrac1{\lambda   \gamma }
    +
    \tfrac{n}   { \beta }
     )
    \log  (
    \tfrac{ \beta }b
      (
     \tfrac1 {\lambda   \gamma }
    +
    \tfrac{n }  { \beta }
     )
    \tfrac1{\epsilon_\calG}
     ).
$
 and hence the iteration complexity is (ignoring logarithmic terms):
%\begin{equation}\label{eq:smoothrate}
$\tilde{\mathcal{O}}\left(
  \tfrac{n}   { b}+
      \tfrac{\beta  
 }b \tfrac1{\lambda   \gamma }
\right).$
%\end{equation}

{\bf $L$-Lipschitz Continuous Loss.} 
Combining equations
\eqref{eq:dualityRequirements} 
and
\eqref{eq:dualityRequirements3}, and again ignoring logarithmic
factors, we get an iteration complexity of:
\begin{equation}
  \label{eq:liprate}
\tilde{\mathcal{O}}\left(
\tfrac nb 
    + \tfrac{\beta}{b} \tfrac{L^2}{\lambda \epsilon_\calG}
\right).
\end{equation}
Plugging in $\beta_{std}$ into \eqref{eq:liprate} recovers the previous
analysis of Lipschitz loss with standard sampling.

Both the Lipschitz and smooth cases involve two terms: the first term,
$\tfrac{n}{b}$, always displays a linear improvement as we increase
the mini-batch size.  However, in the second term, we also have a
dependence on the data-dependent $1\leq \beta\leq b$, which depends on
the mini-batch size $b$.  We will have a linear improvement in the
second term, i.e.~potential for linear speedup, as long as
$\beta=O(1)$.  For standard sampling we have that $\beta \approx
1+b\sigma^2$, and so we obtain linear speedups as long as
$b=O(1/\sigma^2)$, as discussed in \eqref{eq:liprate}.  We can now
also quantify the effect of distributed sampling and see that it is
quite negligible and yields almost the same speedups and the same
maximum allows mini-batch size as with standard sampling.  Note that
typically we will have $C \ll b$, as we would like to
process multiple example on each machine---otherwise communication
costs would overwhelm computational costs
\cite{shamir2014distributed}.  The analysis supports this choice as
well as the extreme choice $C=b$.
 
Focusing on the smooth loss, it is possible to obtain a linear
reduction in the iteration complexity (corresponding to linear
speedups) for SGD with mini-batch size of up to $\mathcal{O}(\sqrt n
)$ without any data-dependent assumption, that is regardless of the
value of $\beta$
\cite{dekel2012optimal,agarwal2011distributed,cotter2011better}.  Is
this possible also with SDCA?  Indeed, even if we don't account for
the data dependent quantity $\beta$, since we always have $\beta\leq
b$, then the iteration complexity of SDCA for mini-batch SDCA with
smooth loss is: $ \mathcal{O}(1/(\lambda \gamma) + n/b) \log
(1/\epsilon))$ a larger mini-batch scales the second term
(unconditional on any data dependence), and as long as it is the
dominant term, we get linear speedups.  Now, to get the min-max
learning guarantee, we need to set $\lambda=\Theta(1/\sqrt n ) $(see
\cite{shalev2008svm}).  Plugging this in, we see that we get linear
speedups up to a mini-batch of size $\mathcal{O}(\gamma \sqrt n )$.
Unsurprising, this is the same as the mini-batch SGD guarantee.  Now,
if we do take data-dependence into account, we have
$\beta=\mathcal{O}(1+b \sigma^2)$ (where $\sigma^2$ is as defined 
above).  As long as $b<1/\sigma^2$, we get
linear speedps even if the $1/\lambda$ term is dominant, i.e.
regardless of the scaling of lambda relative to $n$.  This is good,
because in practice, and especially when the expected error is low,
the best lambda is often closer to $1/n$ and not $1/\sqrt n $.
Returning to the worst-case rate and $\lambda=1/\sqrt m$ : we now have
an allowed mini-batch size of up to $b=\mathcal{O}(\gamma \sqrt n
/\sigma^2)$ while still getting linear scaling.  That is, we can
combined the benefits of both smoothness, where we can scale the
mini-batch size by $\sqrt{n}$, and the data dependence, to get an
additional scaling by $1/\sigma^2$.

\section{Comparison with CoCoA+}
 
CoCoA+ \cite{ma2015adding} is a recently presented framework and
analysis for distributed optimization of the dual \eqref{eq:dualOfP}:
Data (and hence dual variables) are partitioned among $C$ machines (as
in our distributed sampling), defining $C$ subproblems, one for each
machine.  At each iteration, the set of dual variables of each of the
$C$ machines are updated independently, and then communicated and
aggregated across machines.  Different local updates can be used, and
the CoCoA+ analysis depends on how well the update improves the local
subproblem.  Here we will consider using local SDCA updates in
conjunction with CoCoA+: at each iteration, on each of the $C$
machines, $b/C$ dual variables are selected (as in our distributed
sampling), and $H$ iterations of SDCA are performed sequentially on
these $b/C$ points (in parallel on each of the $C$ machines, and while
considering all other dual variables, including all variables on other
machines, as fixed).

We will consider for simplicity $1$-smooth loss functions and compare
the CoCoA+ guarantees on the number of required iterations
\cite{ma2015adding} to the SDCA gurantees we present here, noting also
the differences in the amount of computation per iterations.  In all
our comparisons, the required communication in each iteration of SDCA
and CoCoA+ is identical and amounts to a single distributed averaging
of vectors in $\R^d$.

{\bf Setting $b=C$ and $H=1$}, we exactly recover mini-batch SDCA with a
minibatch of size $b$, and so we would expect the CoCoA+ analysis to
yield the same guarantee.  However, our guarantee on the number of
required iterations in this case is (ignoring log factors)
$
  \tilde{O}\left(\frac{n}{b} + \frac{1}{b \lambda} +
      \frac{\sigma^2}{\lambda} \right)
$
compared to the CoCoA+ guarantee (ignoring log factors):
$
  \tilde{O}\left(\tfrac{n}{b} + \tfrac{n\tilde\sigma^2}{b \lambda }
+ \tfrac{1}{\lambda} + \tfrac{\tilde\sigma^2}{\lambda^2}\right)
$, where  $\tilde\sigma^2 
= \max_c 
\max_{\alpha : \sum_{i \in \mathcal{P}_c} \|\alpha_i x_i\|^2=1 }
\left(
\tfrac Cn
\| \sum_{i \in  \mathcal{P}_c} \alpha_i x_i \|\right)
 \geq  \sigma^2  \geq  1/n$.  
Our guarantee therefore dominates that of CoCoA+: the second term
is worse by a factor of $n\tilde\sigma^2>1$, the third by a factor of
$1/\sigma^2<1$ and the fourth term in the CoCoA+ bound, can be
particularly bad when $\lambda$ is small (e.g.~when $\lambda\propto 1/n$).

{\bf Setting $b>C$ and $H=b/C$}, both minibatch SDCA and CoCoA+
perform the same number of SDCA updates (same amount of computation)
at each iteration, but while minibatch SDCA's updates are entirely
independent, each group of $H$ CoCoA+ updates (the $H$ updates on the
same machine) are performed sequentially.  We would therefore expect
CoCoA+'s updates to be better, and therefore require less iterations.
Unfortunately, the CoCoA+ analysis does not show this.

To see the deficiency in the CoCoA+ analysis at another extreme,
consider the case where {\bf $b=n$, $1<C<n$ and $H\rightarrow\infty$}.
In this case, each iteration of mini-batch SDCA is actually a full
batch of parallel updates (updating each coordinate independently),
while for CoCoA+ this corresponds to fully optimizing each group of
$n/C$ dual variables using many SDCA updates (and thus much more
computation).  Still, the CoCoA+ iteration bound here would be
$\tilde{O}\left(1+\tfrac{ \sigma' \tilde \sigma^2}{\lambda}\right)$,
where $\sigma'=\max_{\alpha} \tfrac1C \tfrac{\|X^T \alpha\|}{
\sum_{c} \| \sum_{i \in \mathcal{P}_c} x_i \alpha_i\|  }$  and so $\sigma' \tilde\sigma^2\geq \sigma^2$,
compared to the   better mini-batch SDCA bound
$\tilde{O}\left(1+\tfrac{ \sigma^2}{\lambda}\right)$.

And so, even though CoCoA+ with SDCA updates should be a more powerful algorithm, its analysis \cite{ma2015adding} fails to show benefits
over the simpler mini-batch SDCA, and out analysis here of mini-batch
SDCA even dominates the CoCoA+ analysis. The reason for this is that CoCoA+ aims to be a more general framework capable of including arbitrary local solvers. Hence, necessarily, the analysis must be more conservative.

\section{Numerical Experiments}
\label{sec:numerical}

In this Section we show that the cost of distribution is negligible (in terms of \# iterations) when compared to standard mSDCA.
We also show that if $b\gg1$, then CoCoA+ is faster than mSDCA in practice. We have run experiments on 4 datasets (see Table~ \ref{tab:datasets}). Note that most of the datasets are sparse (e.g, news20: an average tsample depends on 385 features out of 1.3M).
 \begin{table}
\caption{Basic characteristics of datasets; 
%\emph{cov} is the forest covertype dataset of \cite{pegasos},
%\emph{astro-ph} consists of abstracts of papers from physics also of \cite{pegasos},
% \emph{rcv1} is from the Reuters collection and \emph{news20} is from the 20 news groups both
obtained from libsvm collection \cite{libsvm}.
}
\label{tab:datasets}
\centering
\small
\begin{tabular}{c||r|r|r|r}
 name & \# train. samples & \# test samples & \# features &  Sparsity
 \\ \hline \hline
% \emph{cov}  & 522,911  & 58,101 & 54 &  
 %Training Dataset: 522911 samples 54 features 6274932 nnz 
%Testing Dataset: 58101 samples 54 features 697212 nnz 
\emph{epsilon}
& 400,000 &--& 2,000 & 100\%
 \\
 \emph{rcv1} &  20,242 &677,399 & 47,236 & 
   0.15\%
 %Training Dataset: 20242 samples 47236 features 1498952 nnz 
%Testing Dataset: 677399 samples 47236 features 49556258 nnz 
 %\\
 %kdda & 8,407,752 & 510,302 & 20,216,830 &36.349 & 37.747
%Training Dataset: 8407752 samples 20216830 features 305613510 nnz 
%Testing Dataset: 510302 samples   2014669 features 19262607 nnz  
 %\\
% \emph{astro-ph}& 29,882 & 32,487  & 99,757 &  
% Training Dataset: 29882 samples 99757 features 2310400 nnz 
%Testing Dataset: 32487 samples 99745 features   2461781 nnz 
 %\\
% webspam 3grm &
% \\
% webspam 1grm & 200,000 & 150,000 & 254 & 85.121 & 85.148
 %Training Dataset: 200000 samples 254 features 17024182 nnz 
 %Testing Dataset: 150000 samples 128 features 12772151 nnz 
 \\ 
 \emph{news20} & 15,000 & 4,996 & 1,355,191
 & 0.03\%  
 \\
 \emph{real-sim}
 & 72,309& -- & 20,958 &
  0.24\%
%Training Dataset: 15000 samples 1020750 features 5777790 nnz 
%Testing Dataset: 4996 samples   1355191 features   3320168 nnz 
\end{tabular}
\end{table}

{\bf Standard vs. Distributed SDCA.}
Figure \ref{fig:speedupsDist} (top row)
compares standard and distributed SDCA.
Recall that distributed sampling with $\C=1$ and standard mini-batch sampling coincide. On the $x$-axis is the parameter $b$ and on the $y$-axis we plot how much more data-accesses we have to as $b$ or $\C$ grow, to get achieve the same accuracy. We see that the lines are almost identical for various choices of $C$, which implies that the cost of using distributed mSDCA does not affect the number of iterations significantly. This is also supported by the theory
(notice that in  \eqref{eq:beta} we have
$\beta_{\mbox{dist}} / \beta_{\mbox{std}}  \approx 1$). 
Also note that, for news20 for instance, increasing $b$ to $10^4$
implies that the number of data-accesses (epochs) will increase by a factor of 11, which implies that  \# iterations will decrease almost by 1,000 for $b=10^4$ when compared with $b=1$.
\begin{figure} 
 \vskip-0.4cm
\centering

\includegraphics[width=1.3in]{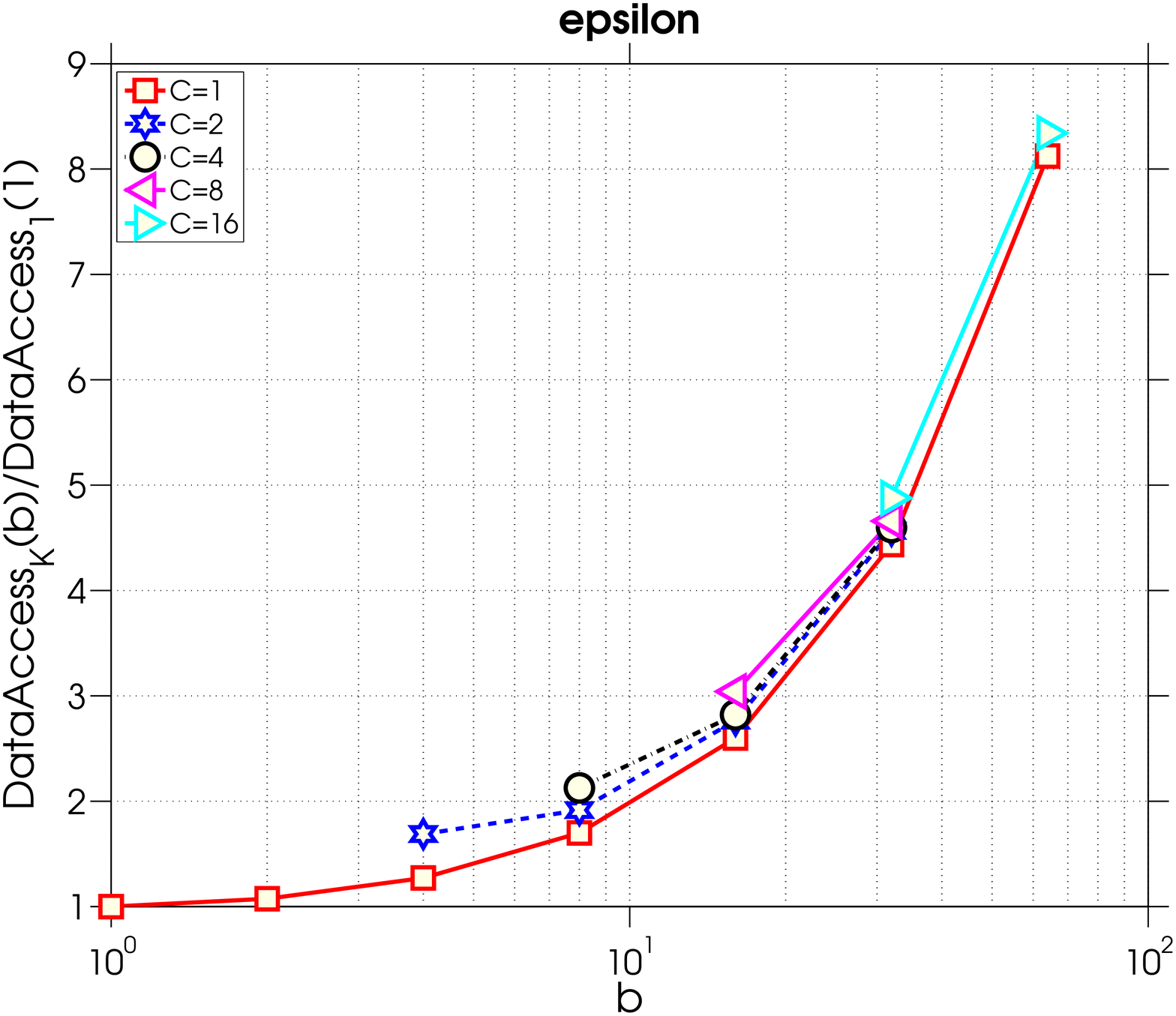}
\includegraphics[width=1.3in]{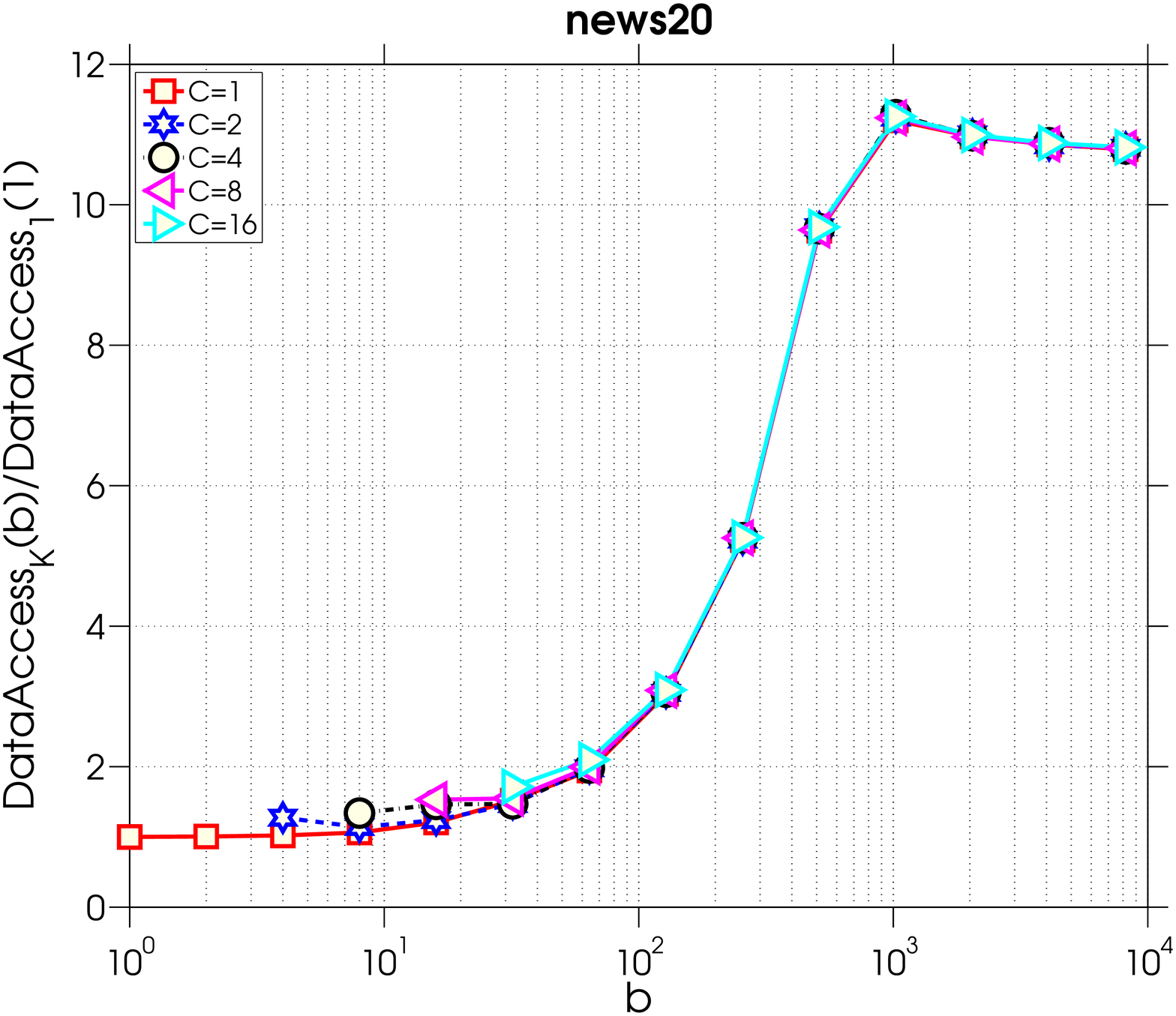}
\includegraphics[width=1.3in]{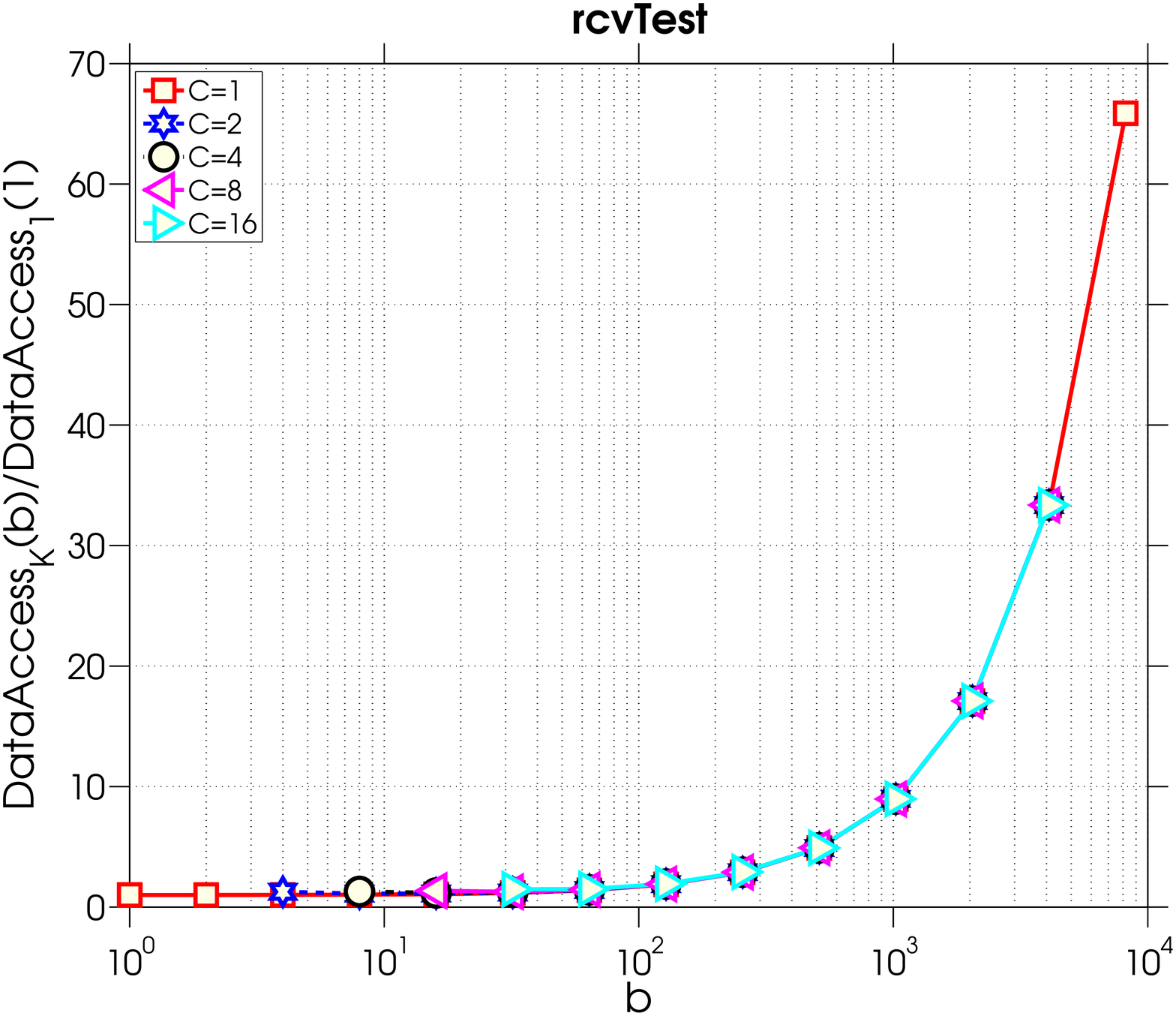}
\includegraphics[width=1.3in]{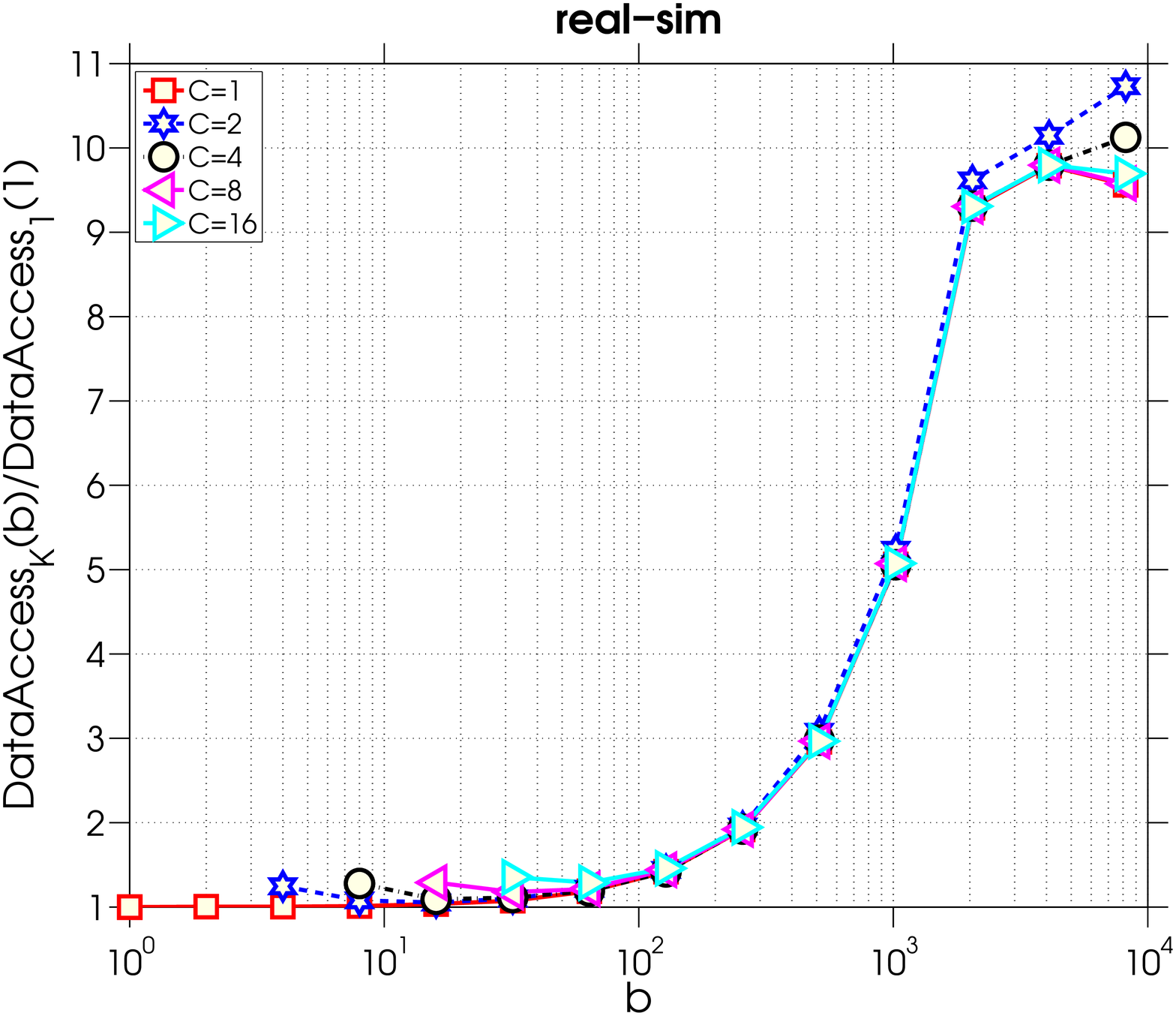}

 \includegraphics[width=1.3in]{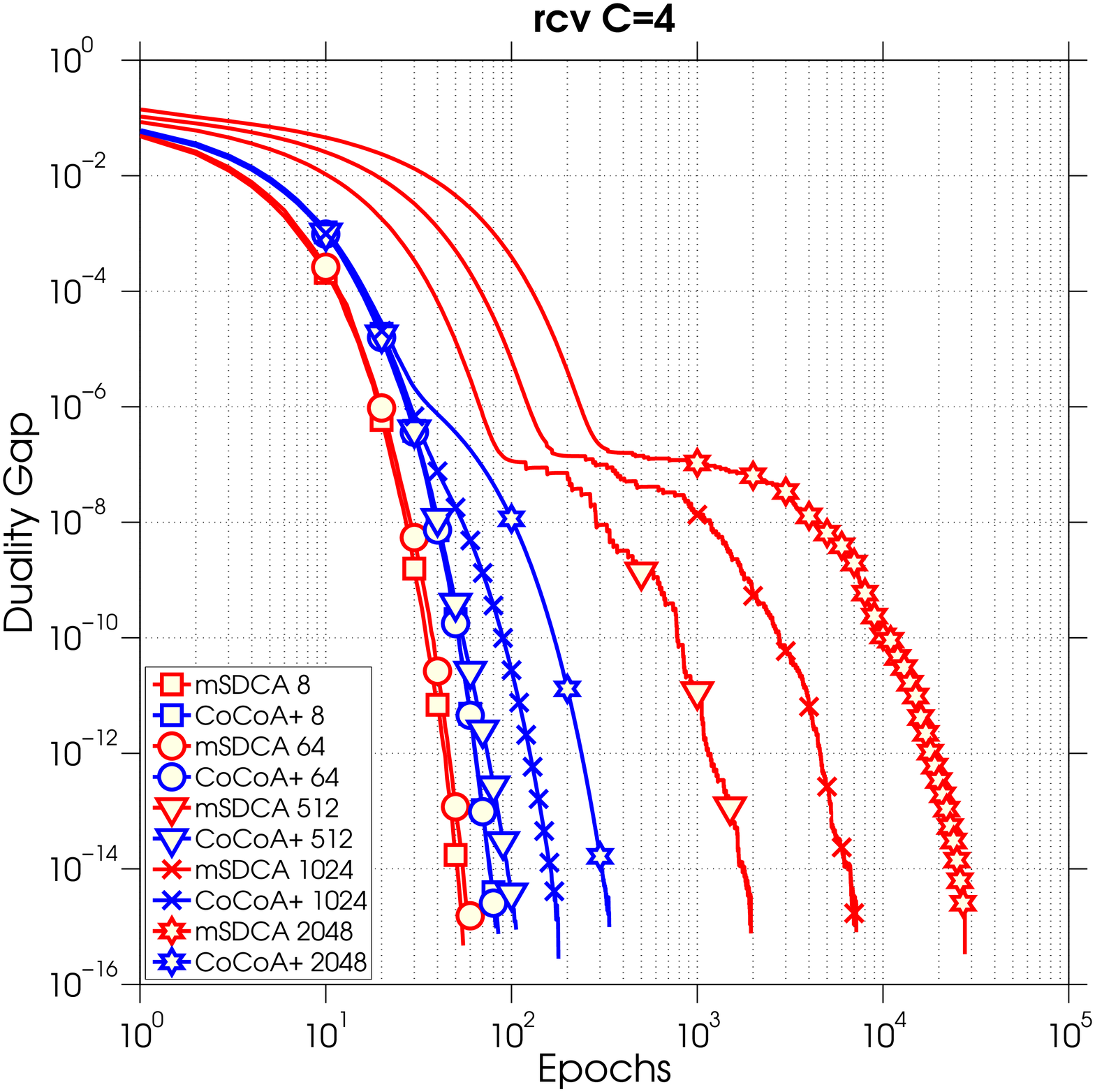}
  \includegraphics[width=1.3in]{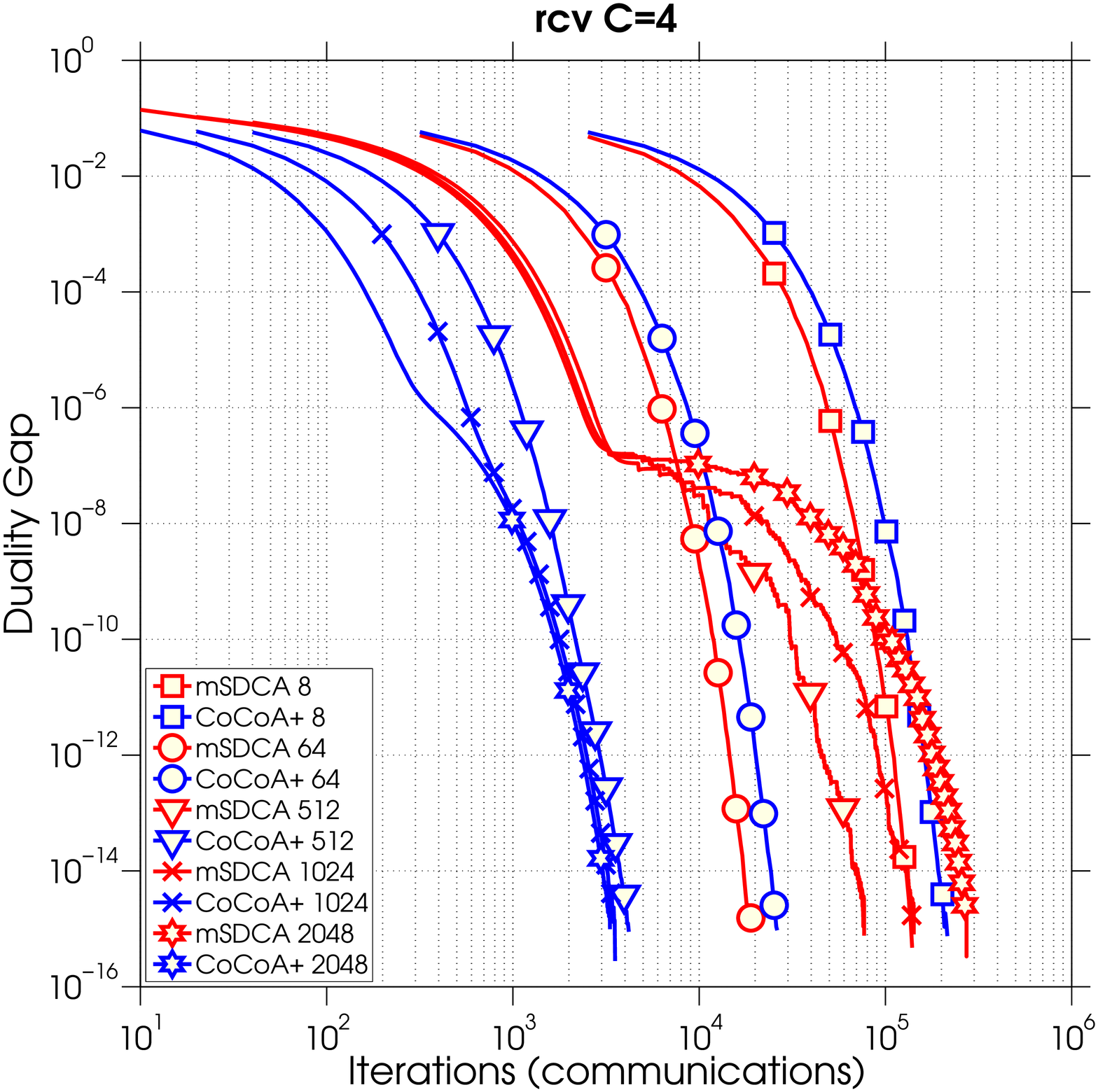}
 \includegraphics[width=1.3in]{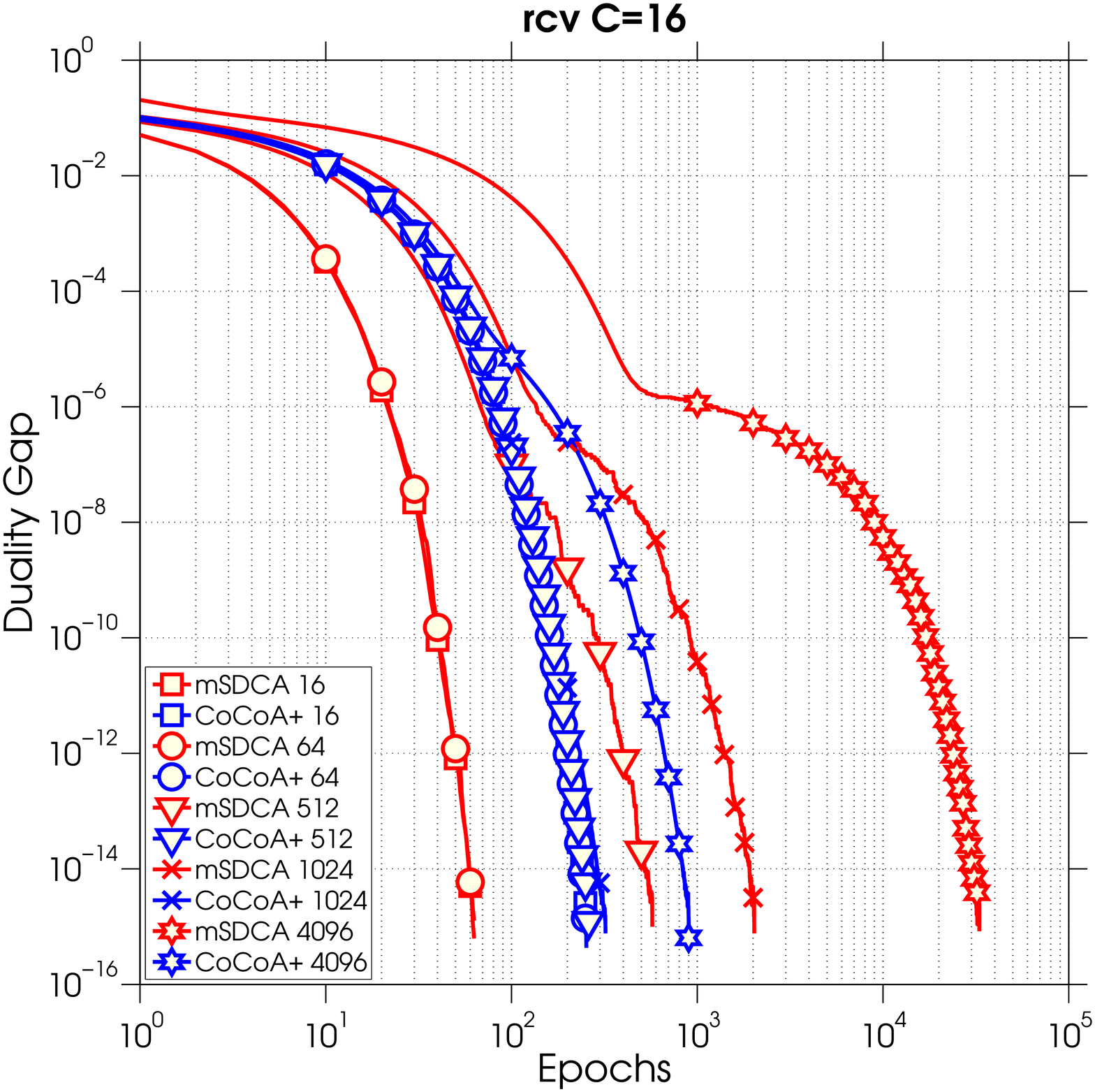}
 \includegraphics[width=1.3in]{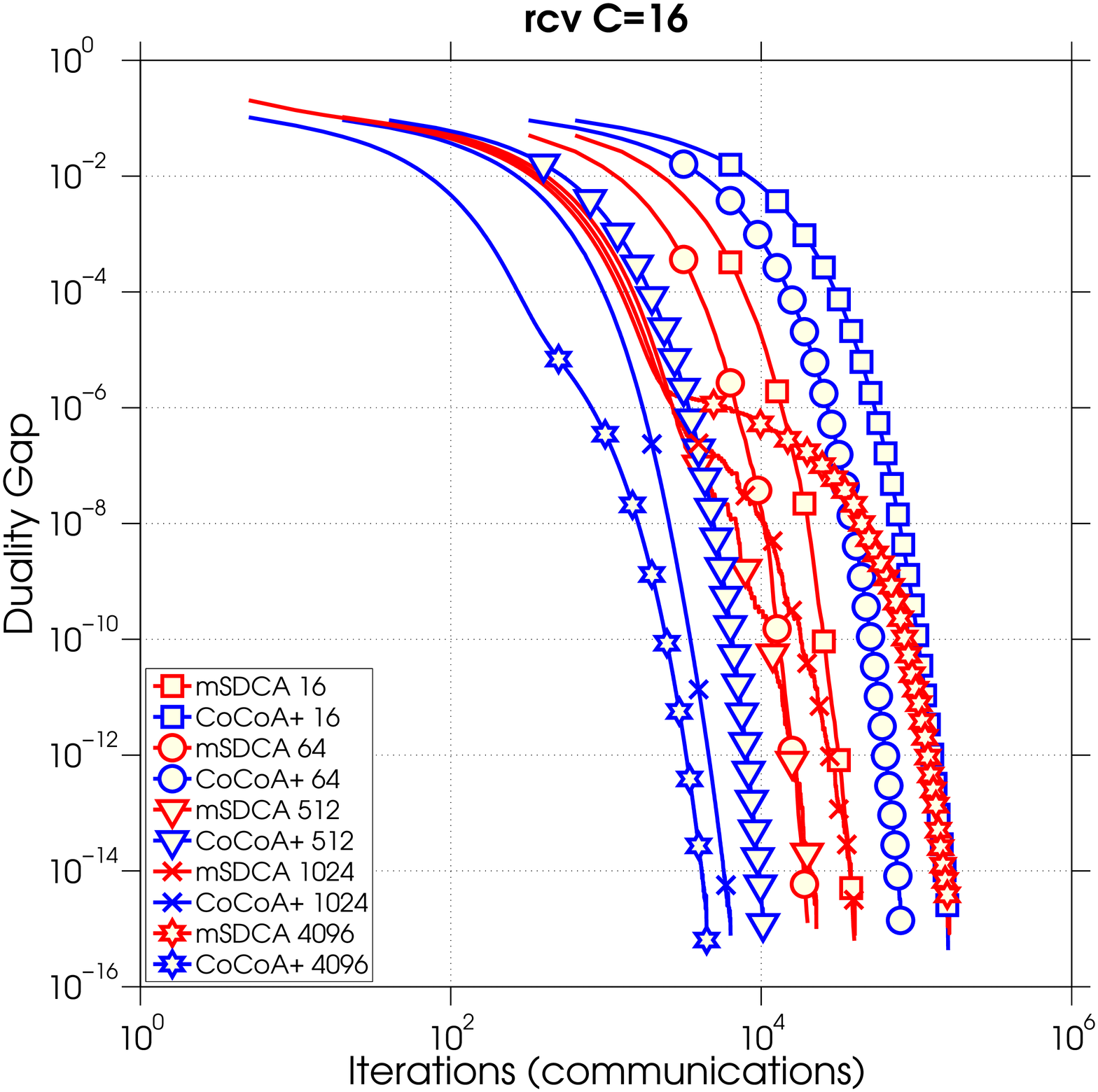} 
  
\caption{\footnotesize  TOP ROW: Number of iterations needed to get an approximate solution is almost the same for standard SDCA and distributed SDCA for $C\in\{1,2,4,8,16\}$. BOTTOM ROW: Comparison of mSDCA and CoCoA+ when solving the SVM dual problem on $C=4$ computers (left) and $C=16$ computers (right).}
\label{fig:speedupsDist}
\end{figure} 

%  \begin{figure} 
% \vskip-0.7cm
%
%\caption{\footnotesize .} 
% \label{fig:mSDCAvcCoCoAp}
% \end{figure}

\iffalse
\subsection{Not sure to put it there ...... }

 In these plots I run the code for many iterations and then 
 I measured how many time (epochs)
 I had to do to get some duality gap ($10^{-2}$, $10^{-5}$ and
 $10^{-10}$).
 
 Then on y-axis I show how much more data I have to access if I am using bigger minibatch size.
 See Figure \ref{fig:speedups}.

 Also there is a speedup plot
 Figure \ref{fig:speedupsB},
where on y-axis
I put
\begin{equation}
\frac{\mbox{Number of iterations
neede to get given $\epsilon$ for $\tau=1$}}{\mbox{Number of iterations
neede to get given $\epsilon$ for $\tau$}}
\end{equation}

\begin{figure}
\centering

\includegraphics[width=1.3in]{speedup_epsilon.eps}
%\includegraphics[width=2in]{speedup_mushrooms.eps}
\includegraphics[width=1.3in]{speedup_news20.eps}
\includegraphics[width=1.3in]{speedup_rcvTest.eps}
\includegraphics[width=1.3in]{speedup_real-sim.eps} 

\caption{Speed-ups}
\label{fig:speedups}
\end{figure}

\begin{figure}[!t]
\centering

\includegraphics[width=1.3in]{speedupB_epsilon.eps}
\includegraphics[width=1.3in]{speedupB_news20.eps}
\includegraphics[width=1.3in]{speedupB_rcvTest.eps}
\includegraphics[width=1.3in]{speedupB_real-sim.eps}

\caption{Speed-ups}
\label{fig:speedupsB}
\end{figure}

\fi

{\bf mSDCA vs. CoCoA+.}
 In Figure \ref{fig:speedupsDist} (bottom row)
 we compare the mSDCA with CoCoA+ with SDCA as a local solver. We plot the duality gap as a function of epochs (if communication is negligible then the main cost is in computation) or iterations (if the communication cost is significant than this is the correct measure of performance). As the results suggest, is the communication cost it negligible then the mSDCA with small $b$ is the best (as expected), however, if communications cost is significant, then CoCoA+ with large values of $H$ significantly outperforms mSDCA.

 \clearpage

 \iffalse
 
\section{Summary}

\todo[inline]{

------------------------------
\\
Summary result for special case of uniform or partitioned
distribution, with $\sigma^2$ dependence, for non-smooth and smooth

** Speedups and mini-batch size limits

** Comparison to other approaches, summary table

}

\subsubsection*{Acknowledgments}

Use unnumbered third level headings for the acknowledgments. All
acknowledgments go at the end of the paper. Do not include 
acknowledgments in the anonymized submission, only in the 
final paper. 
 \fi 
 
\bibliographystyle{plain} %unsrt %plain
%\bibliography{../refs}
 {\bibliography{minibatch}}

%************************************************************************************
%                 APPENDIX
%************************************************************************************
 %\end{document}
\clearpage
\appendix

\section{Technical Results}

\begin{lemma}[Lemma 2 in \cite{ShalevShawartzZhang}]
\label{lemma:basicValueBound}
 For all $\alf\in\R^n$:
 \begin{align}\label{eq:boundOnD}
  \bD(\alf) \leq \bP(\w^*) \leq \bP({\bf 0}) \leq 1.
 \end{align}
Moreover $\bD({\bf 0}) \geq 0$. 
\end{lemma}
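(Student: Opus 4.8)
The plan is to establish the chain \eqref{eq:boundOnD} from right to left, using only weak duality, the optimality of $\w^*$, and the normalization assumptions on $\phi_i$ from the "Assumptions on Loss Function" paragraph. First I would verify the rightmost inequality $\bP(\mathbf 0)\le 1$: plugging $\w=\mathbf 0$ into \eqref{eq:P} gives $\bP(\mathbf 0)=\tfrac1n\sum_{i=1}^n\phi_i(0)+0$, and since the assumption states $\phi_i(0)\le 1$ for all $i$, the average is at most $1$. (If one wants the strictly stronger reading, the paper's normalization $\phi_i(0)\le 1$ is exactly what is needed.) Next, $\bP(\w^*)\le\bP(\mathbf 0)$ is immediate because $\w^*$ is by definition a minimizer of the primal objective $\bP$.

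The main content is the left inequality $\bD(\alf)\le\bP(\w^*)$, i.e.\ weak duality between \eqref{eq:dualOfP} and \eqref{eq:P}. I would prove it directly: fix any $\alf\in\R^n$ and set $\w=\w_\alf=\tfrac1{\lambda n}\X^T\alf$. For each $i$, the Fenchel--Young inequality applied to the conjugate pair $(\phi_i,\phi_i^*)$ gives $\phi_i^*(-\alpha_i)\ge(-\alpha_i)(\w^T\x_i)-\phi_i(\w^T\x_i)$, hence $-\phi_i^*(-\alpha_i)\le\phi_i(\w^T\x_i)+\alpha_i(\w^T\x_i)$. Averaging over $i$ and using $\tfrac1n\sum_i\alpha_i\x_i^T\w=\tfrac1n\alf^T\X\w=\lambda\|\w\|^2$ (by the definition of $\w_\alf$), one obtains
\[
-\tfrac1n\textstyle\sum_i\phi_i^*(-\alpha_i)\;\le\;\tfrac1n\textstyle\sum_i\phi_i(\w^T\x_i)+\lambda\|\w\|^2.
\]
Subtracting $\tfrac\lambda2\|\w\|^2$ from both sides, the left side becomes exactly $\bD(\alf)$ (since $\tfrac\lambda2\|\tfrac1{\lambda n}\X^T\alf\|^2=\tfrac\lambda2\|\w\|^2$) and the right side becomes $\bP(\w)$. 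Thus $\bD(\alf)\le\bP(\w_\alf)$, and since $\w^*$ minimizes $\bP$ we get $\bD(\alf)\le\bP(\w^*)$. This is really the only step with any substance; the delicate point is getting the cross term $\tfrac1n\alf^T\X\w$ to cancel correctly against the quadratic terms, so I would be careful with the factors of $\lambda$ and $n$.

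Finally, for $\bD(\mathbf 0)\ge 0$: evaluate \eqref{eq:dualOfP} at $\alf=\mathbf 0$, giving $\bD(\mathbf 0)=-\tfrac1n\sum_i\phi_i^*(0)$, and note $\phi_i^*(0)=\max_z(-\phi_i(z))=-\min_z\phi_i(z)\le 0$ because $\phi_i$ is non-negative. Hence $-\tfrac1n\sum_i\phi_i^*(0)\ge 0$, as claimed. No step here is an obstacle; the only thing to watch is that all three structural assumptions on the losses (non-negativity, $\phi_i(0)\le 1$, and convexity so that Fenchel--Young and $\phi_i^{**}=\phi_i$ apply) are invoked in the right places.
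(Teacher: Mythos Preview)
The paper does not actually prove this lemma; it simply quotes it as Lemma~2 of \cite{ShalevShawartzZhang}. So there is nothing to compare against, and your write-up is useful added content. The bounds $\bP(\mathbf 0)\le 1$, $\bP(\w^*)\le\bP(\mathbf 0)$, and $\bD(\mathbf 0)\ge 0$ are all handled correctly.

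However, your weak-duality step contains a genuine logical slip. You fix $\w=\w_\alf$, correctly derive $\bD(\alf)\le\bP(\w_\alf)$, and then write ``since $\w^*$ minimizes $\bP$ we get $\bD(\alf)\le\bP(\w^*)$.'' That inference is backwards: optimality of $\w^*$ gives $\bP(\w^*)\le\bP(\w_\alf)$, which together with $\bD(\alf)\le\bP(\w_\alf)$ does \emph{not} imply $\bD(\alf)\le\bP(\w^*)$. What you have actually shown is only that the duality gap $\calG(\alf)=\bP(\w_\alf)-\bD(\alf)$ is nonnegative, not weak duality.

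The fix is small: do not specialize to $\w=\w_\alf$. Your Fenchel--Young step $-\phi_i^*(-\alpha_i)\le\phi_i(\w^T\x_i)+\alpha_i\,\w^T\x_i$ holds for \emph{every} $\w\in\R^d$. Averaging and writing $\tfrac1n\alf^T\X\w=\lambda\,\w_\alf^T\w$, one gets
\[
\bD(\alf)\;\le\;\tfrac1n\textstyle\sum_i\phi_i(\w^T\x_i)+\lambda\,\w_\alf^T\w-\tfrac\lambda2\|\w_\alf\|^2
\;=\;\bP(\w)-\tfrac\lambda2\|\w-\w_\alf\|^2\;\le\;\bP(\w),
\]
valid for all $\w$; now plugging in $\w=\w^*$ gives $\bD(\alf)\le\bP(\w^*)$ as required. (Your specialization $\w=\w_\alf$ is exactly the case where the slack $\tfrac\lambda2\|\w-\w_\alf\|^2$ vanishes, which is why your cross-term cancelled so cleanly---but that is the one choice of $\w$ that does not let you reach $\w^*$.)
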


Following Lemma is a minibatch extension of Lemma 1 in \cite{ShalevShawartzZhang}.
\begin{lemma}[Expected increase of dual objective]
\label{lem:basicLemma}
Assume that $\phi_i^*$ is $\gamma$-strongly convex ($\gamma$ can be also zero).
Then, for any $t$ and any $s\in[0,1]$ we have 
\begin{equation}\label{eq:relationOfDualDecreaseAndDualityGap}
 \Exp[\bD(\vc{\alf}{t+1})-\bD(\vc{\alf}{t})]
\geq
   b   (
  \tfrac {s  }{n} \calG(\vc{\alf}{t})
  -  (\tfrac{s}{n} )^2
       \tfrac{1}{ 2  \lambda }
  \vc{G}{t}
 ),
\end{equation}
where  
\begin{align} \nonumber
 \vc{G}{t} &= \tfrac1n ( 
         \|  \vc{\vu}{t}-\vc{\alf}{t}   \|^2_\V
        -\tfrac{\gamma  \lambda n(1-s)}{s }   \|\vc{\vu}{t}-\vc{\alf}{t}\|^2       
       ) 
\\&= \tfrac1n  \label{eq:defOfG}
 \textstyle{\sum}_{i=1}^n  ( 
        v_i
        -\tfrac{\gamma  \lambda n(1-s)}{s } 
       )  (\vc{u_i}{t}-\vc{\alpha_i}{t})^2,      
\end{align}
$\vu_t=(\vc{u_1}{t},\dots,\vc{u_n}{t})^T$ 
and  $-\vc{u_i}{t} \in \partial \phi_i(\w_{\vc{\alf}{t}}^T\x_i)$. 
\end{lemma}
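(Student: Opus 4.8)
The plan is to follow the structure of the proof of Lemma 1 in \cite{ShalevShawartzZhang}, adapting it to the mini-batch setting using the separable expected lower-bound $\calH$ from Section \ref{sec:ESO}. First I would note that since each coordinate $i$ of $\vc{\Delta\alpha}{t}$ in Algorithm \ref{alg:SafeMiniBatching} is chosen to \emph{maximize} $\calH(\cdot,\vc{\alf}{t})$ separably (as observed after \eqref{eq:H_definition}), we have for \emph{any} candidate vector $\vt\in\R^n$ that $\Exp[\bD(\vc{\alf}{t}+\vsubset{(\vc{\Delta\alpha}{t})}{\hatS})] \geq \tfrac bn \calH(\vc{\Delta\alpha}{t},\vc{\alf}{t}) + (1-\tfrac bn)\bD(\vc{\alf}{t}) \geq \tfrac bn \calH(\vt,\vc{\alf}{t}) + (1-\tfrac bn)\bD(\vc{\alf}{t})$, using the separable lower-bound displayed just before the start of Section \ref{sec:ESO:computingV}. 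This reduces the problem to exhibiting a single good choice of $\vt$ and lower-bounding $\tfrac bn(\calH(\vt,\vc{\alf}{t}) - \bD(\vc{\alf}{t}))$.

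The key step is then the choice $\vt = s(\vc{\vu}{t} - \vc{\alf}{t})$ for $s\in[0,1]$, where $-\vc{u_i}{t}\in\partial\phi_i(\w_{\vc{\alf}{t}}^T\x_i)$, exactly as in \cite{ShalevShawartzZhang}. With this choice I would expand $\calH(\vt,\vc{\alf}{t}) - \bD(\vc{\alf}{t})$ using \eqref{eq:H_definition} and \eqref{eq:dualOfP}. The conjugate term $-\tfrac1n\sum_i\phi_i^*(-(\alpha_i + s(u_i^{(t)}-\alpha_i)))$ is handled by $\gamma$-strong convexity of $\phi_i^*$: writing the argument as a convex combination $(1-s)(-\alpha_i) + s(-u_i^{(t)})$, strong convexity gives $\phi_i^*(-(\alpha_i+s(u_i^{(t)}-\alpha_i))) \leq (1-s)\phi_i^*(-\alpha_i) + s\phi_i^*(-u_i^{(t)}) - \tfrac{\gamma}{2}s(1-s)(u_i^{(t)}-\alpha_i)^2$. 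Then $-\phi_i^*(-u_i^{(t)}) = u_i^{(t)}\cdot(\w_{\vc{\alf}{t}}^T\x_i) - \phi_i(\w_{\vc{\alf}{t}}^T\x_i)$ by the conjugate/subgradient identity ($u_i^{(t)}$ being a subgradient-related point). Collecting the linear-in-$\vt$ and quadratic-in-$\vt$ terms of $\calH$, the cross terms should telescope into $s$ times the duality gap $\calG(\vc{\alf}{t}) = \bP(\w_{\vc{\alf}{t}}) - \bD(\vc{\alf}{t})$, and the remaining quadratic terms — $\tfrac\lambda2\|\tfrac1{\lambda n}\vt\|_\V^2 = \tfrac{s^2}{2\lambda n^2}\|\vc{\vu}{t}-\vc{\alf}{t}\|_\V^2$ together with the $-\tfrac{\gamma}{2}s(1-s)\|\vc{\vu}{t}-\vc{\alf}{t}\|^2$ contribution — assemble precisely into $-\left(\tfrac sn\right)^2\tfrac{1}{2\lambda}\vc{G}{t}$ with $\vc{G}{t}$ as in \eqref{eq:defOfG}. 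Multiplying through by $\tfrac bn$ yields \eqref{eq:relationOfDualDecreaseAndDualityGap}.

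The main obstacle I anticipate is the bookkeeping in the expansion of $\calH(\vt,\vc{\alf}{t}) - \bD(\vc{\alf}{t})$: one must carefully track the $\|\w_{\vc{\alf}{t}}\|^2$ terms (which cancel between $\calH$ and $\bD$), correctly pair the $\tfrac1n\vt^T X\w_{\vc{\alf}{t}}$ linear term with the $\phi_i'$/subgradient terms to produce $\w_{\vc{\alf}{t}}^T\x_i$ coefficients that reconstruct $\bP(\w_{\vc{\alf}{t}})$, and verify the sign and scaling on the quadratic remainder so that the strong-convexity gain $-\tfrac\gamma2 s(1-s)$ combines with the $v_i$-weighted quadratic to give exactly the bracket $v_i - \tfrac{\gamma\lambda n(1-s)}{s}$ in \eqref{eq:defOfG} after factoring out $(s/n)^2/(2\lambda)$. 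This is essentially the mini-batch analogue of the $b=1$ computation in \cite{ShalevShawartzZhang}; the novelty is purely that the weighted norm $\|\cdot\|_\V$ replaces $\|\x_i\|^2$-scaled terms and that the factor $\tfrac bn$ enters through the ESO-based separable lower bound rather than through a plain expectation over a single coordinate.
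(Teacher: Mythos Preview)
Your proposal is correct and follows essentially the same approach as the paper's own proof: both reduce via the ESO-based inequality $\Exp[\bD(\alf+\vsubset{\vt}{\hatS})]\geq (1-\tfrac bn)\bD(\alf)+\tfrac bn\calH(\vt,\alf)$ to lower-bounding $\calH(\vt,\vc{\alf}{t})-\bD(\vc{\alf}{t})$ at the specific choice $\vt=s(\vc{\vu}{t}-\vc{\alf}{t})$, then invoke $\gamma$-strong convexity of $\phi_i^*$ and the Fenchel equality $\phi_i^*(-u_i^{(t)})=-u_i^{(t)}\w_{\vc{\alf}{t}}^T\x_i-\phi_i(\w_{\vc{\alf}{t}}^T\x_i)$ to assemble the duality gap and the $\vc{G}{t}$ remainder. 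The only cosmetic difference is that the paper phrases the chain as an upper bound on $\tfrac nb\,\Exp[\bD(\vc{\alf}{t})-\bD(\vc{\alf}{t+1})]$ before multiplying through by $-\tfrac bn$, whereas you work directly with the lower bound; the bookkeeping you flag as the main obstacle is exactly what the paper carries out line by line.
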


\begin{lemma}[Lemma 3 in \cite{ShalevShawartzZhang}]
\label{lemma:asdkfcpwafcwae}
 Let $\phi:\R\to\R$ be an $L$-Lipschitz continuous. Then for any $|\alpha| > L$
 we have that $\phi^*(\alpha) = \infty$.
\end{lemma}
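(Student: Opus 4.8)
The plan is to argue directly from the definition of the convex conjugate, $\phi^*(\alpha) = \sup_{z\in\R}\bigl(z\alpha - \phi(z)\bigr)$, by exhibiting a direction along which the expression inside the supremum diverges to $+\infty$. The only structural fact we need beyond the definition is the elementary consequence of $L$-Lipschitz continuity that $\phi$ grows at most linearly: for every $z\in\R$,
\[
\phi(z) \;\le\; \phi(0) + L|z|.
\]

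First I would treat the case $\alpha > L$. Restricting the supremum to $z>0$ and inserting the linear upper bound on $\phi$ gives
\[
\phi^*(\alpha) \;\ge\; \sup_{z>0}\bigl(z\alpha - \phi(0) - Lz\bigr) \;=\; \sup_{z>0}\bigl((\alpha-L)z - \phi(0)\bigr).
\]
Since $\alpha - L > 0$, letting $z\to+\infty$ forces the right-hand side to $+\infty$, so $\phi^*(\alpha)=+\infty$. The case $\alpha < -L$ is symmetric: restrict the supremum to $z<0$, use $\phi(z)\le \phi(0)-Lz$ there, and obtain $\phi^*(\alpha)\ge \sup_{z<0}\bigl((\alpha+L)z - \phi(0)\bigr)$, which tends to $+\infty$ as $z\to-\infty$ because $\alpha+L<0$. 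Together these two cases cover every $\alpha$ with $|\alpha|>L$.

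There is no genuinely hard step here—the whole argument is a one-line linear estimate—so the only thing that needs care is the bookkeeping with signs: pushing $z$ in the correct direction for each sign of $\alpha$, and observing that the strict inequality $|\alpha|>L$ is exactly what makes the slope $\alpha\mp L$ of the relevant affine lower bound nonzero with the right sign. I would also note in passing that this is precisely Lemma~3 of \cite{ShalevShawartzZhang}; its role in the sequel is to guarantee that, for $L$-Lipschitz losses, every dual iterate satisfies $|\alpha_i|\le L$ (otherwise $\phi_i^*(-\alpha_i)=\infty$ and $\bD$ would be $-\infty$), which is used in the proof of Theorem~\ref{thm:dualityGapForLipFunctions}.
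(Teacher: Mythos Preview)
Your argument is correct and is the standard one-line proof of this fact. Note, however, that the paper does not actually supply its own proof of this lemma: it is merely quoted as Lemma~3 of \cite{ShalevShawartzZhang} and then invoked in the proof of Lemma~\ref{lemma:boundOnGt}. Your write-up is exactly what one would expect for a self-contained justification, and your closing remark about its role (forcing $|\alpha_i|\le L$ so that $\vc{G}{t}$ can be bounded) is also accurate.
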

Following lemma is a small extension of Lemma 4 in \cite{ShalevShawartzZhang}
to obtain more tide bounds  in case each sample has different norm or when ESO bound is used. For example, in serial case we will have
that $\forall t: G^t \leq 4L^2 \frac{\sum_{i=1}^n \|x_i\|^2}{n}$.
\begin{lemma}[Bound on $\vc{G}{t}$]\label{lemma:boundOnGt}
Suppose that for all $i$, $\phi_i$ is $L$-Lipschitz continuous.
Then 
\begin{equation}\label{eq:G_bound}
 \forall t: 
 \vc{G}{t} \leq 4L^2 \tfrac{\textstyle{\sum}_{i=1}^n v_i}{n} .
\end{equation} 
\end{lemma}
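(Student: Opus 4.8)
The plan is to bound, uniformly over $i$ and $t$, the squared difference $(\vc{u_i}{t}-\vc{\alpha_i}{t})^2$ appearing in \eqref{eq:defOfG} by the constant $4L^2$, and then to observe that the weight multiplying it is never larger than $v_i$.

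First I would bound the subgradient term: since $-\vc{u_i}{t}\in\partial\phi_i(\w_{\vc{\alf}{t}}^T\x_i)$ and $\phi_i$ is $L$-Lipschitz, every element of $\partial\phi_i$ has absolute value at most $L$, so $|\vc{u_i}{t}|\le L$. Next I would bound $|\vc{\alpha_i}{t}|$ by showing that the iterates of Algorithm~\ref{alg:SafeMiniBatching} stay in $\dom(\bD)$. The starting point $\vc{\alf}{0}={\bf 0}$ is feasible because $0\le\phi_i(0)\le 1$ gives $\phi_i^*(0)=-\inf_z\phi_i(z)\in[-1,0]$, which is finite; and each step of the algorithm maximizes the separable surrogate $\calH$, whose value is $-\infty$ unless $\phi_i^*(-\vc{\alpha_i}{t+1})<\infty$, so feasibility is preserved by induction on $t$. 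Hence $-\vc{\alpha_i}{t}\in\dom\phi_i^*$ for all $t$, and Lemma~\ref{lemma:asdkfcpwafcwae} yields $|\vc{\alpha_i}{t}|\le L$. Combining the two bounds gives $|\vc{u_i}{t}-\vc{\alpha_i}{t}|\le 2L$, i.e.\ $(\vc{u_i}{t}-\vc{\alpha_i}{t})^2\le 4L^2$ for every $i$ and $t$.

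Finally I would assemble the estimate. Because $\gamma\ge 0$, $\lambda>0$, $n\ge 1$ and $s\in(0,1]$, the coefficient $v_i-\tfrac{\gamma\lambda n(1-s)}{s}$ is at most $v_i$, so each summand in \eqref{eq:defOfG} satisfies $(v_i-\tfrac{\gamma\lambda n(1-s)}{s})(\vc{u_i}{t}-\vc{\alpha_i}{t})^2\le 4L^2 v_i$ (a negative coefficient only makes the summand nonpositive). Summing over $i$ and dividing by $n$ gives $\vc{G}{t}\le 4L^2\tfrac{\sum_{i=1}^n v_i}{n}$, which is \eqref{eq:G_bound}. The only mildly delicate point is the dual-feasibility argument needed to invoke Lemma~\ref{lemma:asdkfcpwafcwae}; everything else is a pair of one-line estimates. (If the lemma is only ever applied with $\gamma=0$, the coefficient is literally $v_i$ and the sign remark can be omitted.)
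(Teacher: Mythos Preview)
Your proof is correct and follows essentially the same approach as the paper's own argument: drop the nonpositive term $-\tfrac{\gamma\lambda n(1-s)}{s}$ from the coefficient, and bound $(\vc{u_i}{t}-\vc{\alpha_i}{t})^2\le (2L)^2$ via Lemma~\ref{lemma:asdkfcpwafcwae}. You simply fill in details the paper leaves implicit, namely the separate subgradient bound $|\vc{u_i}{t}|\le L$ and the inductive feasibility of the iterates needed to invoke Lemma~\ref{lemma:asdkfcpwafcwae} for $|\vc{\alpha_i}{t}|\le L$.
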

\begin{proof}
Indeed,
\begin{align*}
 \vc{G}{t} &\overset{\eqref{eq:defOfG}}{=} 
 \tfrac1n  
 \textstyle{\sum}_{i=1}^n  ( 
        v_i
        -\tfrac{\gamma  \lambda n(1-s)}{s} 
       )  (\vc{u_i}{t}-\vc{\alpha_i}{t})^2
 \overset{\mbox{(Lemma \ref{lemma:asdkfcpwafcwae})}}{\leq} 
 \tfrac1n  
 \textstyle{\sum}_{i=1}^n  ( 
        v_i        
       )  (2L)^2.
      \qedhere
\end{align*}
\end{proof}

\begin{lemma} [Theorem 1 in \cite{richtarik}] \label{l:randomVariableTrick}
Fix $x_0\in \R^N$ and let $\{x_k\}_{k\geq 0}$ be a sequence of random vectors in $\R^N$ with $x_{k+1}$ depending on $x_k$ only. Let $\phi:\R^N\to \R$ be a nonnegative function and define $\xi_k = \phi(x_k)$. Lastly, choose accuracy level $0<\epsilon<\xi_0$, confidence level $0 < \rho< 1$,  and assume that the sequence of random variables $\{\xi_k\}_{k\geq 0}$ is nonincreasing and has one of the following properties:
\begin{enumerate}
\item[(i)] $\E[\xi_{k+1} \;|\; x_k] \leq (1 - \tfrac{\xi_k}{c_1})\xi_k $, for all $k$, where $c_1>\epsilon$ is a constant,
\item[(ii)] $\E[\xi_{k+1} \;|\; x_k] \leq (1-\tfrac{1}{c_2}) \xi_k$, for all $k$ such that $\xi_k\geq \epsilon$, where $c_2>1$ is  a constant.
\end{enumerate}
If property (i) holds and we choose $K \geq 2 + \tfrac{c_1}{\epsilon} (1 - \tfrac{\epsilon}{\xi_0} + \log (\tfrac{1}{\rho}))$, or if property (ii) holds, and we choose
$K\geq c_2 \log (\tfrac{\xi_0}{\epsilon \rho})$, then $\Prob(\xi_K \leq \epsilon) \geq 1-\rho$.
\end{lemma}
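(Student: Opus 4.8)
The plan is to prove both parts by the same device: replace $\{\xi_k\}$ by a ``clipped'' (stopped) sequence that is frozen once it drops below $\epsilon$, so that Markov's inequality applied to it yields a confidence that is \emph{logarithmic}, rather than polynomial, in $1/\rho$ (a naive application of Markov to $\E[\xi_K]$ would only give $\Prob(\xi_K>\epsilon)$ polynomially small in $K$). Throughout write $A_k=\{\xi_k>\epsilon\}$, which is $\sigma(x_k)$-measurable because $\xi_k=\phi(x_k)$; since $\{\xi_k\}$ is nonincreasing we have the nesting $A_0\supseteq A_1\supseteq\cdots$ and hence $\{\xi_K>\epsilon\}=A_K=\bigcap_{k\le K}A_k$. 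Put $A_{-1}=\Omega$. The one-step engine, used in both cases, is: on $A_k$ the relevant hypothesis gives $\E[\xi_{k+1}\mid x_k]\le\theta_k\,\xi_k$ for a suitable factor $\theta_k<1$, so by the tower property and $\mathbf 1_{A_k}\le\mathbf 1_{A_{k-1}}$ one gets $\E[\xi_{k+1}\mathbf 1_{A_k}]\le\theta_k\,\E[\xi_k\mathbf 1_{A_{k-1}}]$; iterating and then using $\mathbf 1_{A_K}\le(\xi_K/\epsilon)\mathbf 1_{A_{K-1}}$ yields $\Prob(\xi_K>\epsilon)\le\tfrac1\epsilon\E[\xi_K\mathbf 1_{A_{K-1}}]$.

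For part (ii): on $A_k$ we have $\xi_k\ge\epsilon$, so the hypothesis applies with constant factor $\theta_k=1-1/c_2$. The iteration above gives $\E[\xi_K\mathbf 1_{A_{K-1}}]\le(1-1/c_2)^K\xi_0$, hence $\Prob(\xi_K>\epsilon)\le(1-1/c_2)^K\xi_0/\epsilon\le e^{-K/c_2}\xi_0/\epsilon$, and this is $\le\rho$ precisely when $K\ge c_2\log(\xi_0/(\epsilon\rho))$.

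For part (i) I would split into two phases, because hypothesis (i) only drives a sublinear ($1/k$) rate in expectation until $\xi$ reaches the $\epsilon$-level. Phase 1: taking expectations in $\E[\xi_{k+1}\mid x_k]\le(1-\xi_k/c_1)\xi_k$ and applying Jensen ($\E[\xi_k^2]\ge(\E\xi_k)^2$) gives $a_{k+1}\le a_k-a_k^2/c_1$ for $a_k:=\E[\xi_k]$; since $a_{k+1}\le a_k$, inverting and telescoping ($1/a_{k+1}-1/a_k\ge 1/c_1$) yields $a_k\le c_1\xi_0/(c_1+k\xi_0)$, so after $k_1:=\lceil c_1/\epsilon-c_1/\xi_0\rceil$ steps we have $\E[\xi_{k_1}]\le\epsilon$ (note $\epsilon<\xi_0$ makes $k_1\ge 0$). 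Phase 2: from time $k_1$ on, whenever $\xi_k>\epsilon$ hypothesis (i) gives $\E[\xi_{k+1}\mid x_k]\le(1-\epsilon/c_1)\xi_k$ — exactly the situation of part (ii) with $c_2=c_1/\epsilon$ (which is $>1$ since $c_1>\epsilon$) — but now launched from $\E[\xi_{k_1}\mathbf 1_{A_{k_1-1}}]\le\E[\xi_{k_1}]\le\epsilon$. The same iteration gives $\E[\xi_K\mathbf 1_{A_{K-1}}]\le(1-\epsilon/c_1)^{K-k_1}\epsilon$, so $\Prob(\xi_K>\epsilon)\le(1-\epsilon/c_1)^{K-k_1}\le e^{-(K-k_1)\epsilon/c_1}$, which is $\le\rho$ once $K-k_1\ge(c_1/\epsilon)\log(1/\rho)$. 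Adding the two ceilings (for $k_1$ and for $K-k_1$) reproduces the stated threshold $K\ge 2+(c_1/\epsilon)(1-\epsilon/\xi_0+\log(1/\rho))$.

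I expect the main obstacle to be not any single estimate but the measure-theoretic bookkeeping that makes the clipping rigorous: one must carry the indicator $\mathbf 1_{A_{k-1}}$ consistently through the conditional-expectation recursion, using that $A_{k-1}$ is $\sigma(x_{k-1})$-measurable and that nonincreasingness of $\{\xi_k\}$ supplies the nesting $A_k\subseteq A_{k-1}$, and check that the hypotheses of (i)/(ii) are indeed available exactly on the events where the indicator is nonzero. A secondary subtlety in part (i) is the handoff between phases — one should either restart the indicator recursion at time $k_1$ or, more cleanly, bound $\E[\xi_{k_1}\mathbf 1_{A_{k_1-1}}]\le\E[\xi_{k_1}]$ and feed the Phase-1 expectation bound directly into the Phase-2 recursion — together with tracking the two roundings that generate the additive ``$2$'', plus the harmless degenerate case $\E[\xi_{k+1}]=0$ (then $\xi_j\equiv 0$ thereafter and the claim is trivial).
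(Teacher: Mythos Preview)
The paper does not prove this lemma: it is stated in the appendix as a technical result imported verbatim from \cite{richtarik} (``Theorem 1 in \cite{richtarik}'') and is then invoked, without proof, at the end of the proof of Theorem~\ref{thm:dualityGapForSmooth} to obtain the high-probability statement. So there is no in-paper argument to compare your proposal against.

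That said, your proposal is essentially the argument one finds in the cited source. The key device---replacing $\xi_k$ by $\xi_k\mathbf 1_{A_{k-1}}$ with $A_k=\{\xi_k>\epsilon\}$, using monotonicity of $\{\xi_k\}$ to get the nesting $A_k\subseteq A_{k-1}$, and then applying Markov via $\mathbf 1_{A_K}\le (\xi_K/\epsilon)\mathbf 1_{A_{K-1}}$---is exactly the mechanism behind the $\log(1/\rho)$ dependence. Your treatment of case~(ii) is clean and correct. For case~(i), the two-phase split (a sublinear $1/k$ rate on $\E[\xi_k]$ via $1/a_{k+1}-1/a_k\ge 1/c_1$ until $\E[\xi_{k_1}]\le\epsilon$, followed by the case-(ii) recursion with $c_2=c_1/\epsilon$ launched from $\E[\xi_{k_1}\mathbf 1_{A_{k_1-1}}]\le\E[\xi_{k_1}]\le\epsilon$) is the standard route, and your accounting for the two ceilings recovers the additive ``$2$''. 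The measurability bookkeeping you flag (that $A_{k-1}\in\sigma(x_{k-1})$ and that the hypothesis is available on $A_k$) is precisely where care is needed, and you handle it correctly. One minor point you should make explicit: in Phase~1 the step $1/a_{k+1}\ge 1/a_k+1/c_1$ tacitly uses $a_k<c_1$; this is harmless because if ever $\xi_k\ge c_1$ then the hypothesis forces $\E[\xi_{k+1}\mid x_k]\le 0$, hence $\xi_{k+1}=0$ a.s.\ and the conclusion is immediate.
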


\section{Proofs}

\subsection{Proof of Lemma \ref{lem:basicLemma}}
%\begin{proof}
Let us define $\T_\alf$ as an unique maximizer
of a function $\calH(\vt,\alf)$ defined in \eqref{eq:H_definition}, i.e. 
\begin{equation}\label{eq:T_alpha_definition}
 \T_\alf := \arg\max_\vt \calH(\vt,\alf).
\end{equation}
Let us now state some basic properties about function $\calH$. We have that
$\forall \vt,\alf \in \R^n$ and sampling $\hatS$:
\begin{itemize}
 \item $\calH({\bf 0},\alf) = \bD(\alf)$,
 \item from ESO  we have 
 \begin{equation}\label{eq:upperBoundOnExpectedD}
  \Exp[\bD(\alf+\vsubset{\vt}{\hatS})]\geq (1-\tfrac bn) \bD(\alf) + \tfrac bn \calH(\vt,\alf),
 \end{equation}
 \item $\calH(\vt,\alf) \leq  \calH(\T_\alf,\alf)$.
\end{itemize}

Convex conjugate maximal property implies that
\begin{equation}\label{eq:maximalConvexPropertyImplication}
 \phi_i^*(-\vc{u_i}{t})=-\vc{u_i}{t} \w_{\vc{\alf}{t}}^T\x_i - \phi_i(\w_{\vc{\alf}{t}}^T\x_i).
\end{equation}
Let us estimate the expected change of dual objective. 
\begin{align*}
 \frac nb \Exp[\bD(\vc{\alf}{t})-\bD(\vc{\alf}{t+1})]
 &=\frac nb \Exp[\bD(\vc{\alf}{t})-\bD(\vc{\alf}{t}+\vsubset{(\T_{\vc{\alf}{t}})}{\hatS})]
 \overset{\eqref{eq:upperBoundOnExpectedD}}{\leq}
 \bD(\vc{\alf}{t})-\calH(\T_{\vc{\alf}{t}},\vc{\alf}{t})
 \\
 & = 
 \frac1n 
 \sum_{i=1}^n \left(\phi_i^*(-(\alpha_i+\vc{(\T_{\vc{\alf}{t}})}{i})) 
  -\phi_i^*(-\vc{\alpha_i}{t}) \right)
 \\&\quad+ \frac\lambda2 
  \left( 
         \left\|\frac1{\lambda n} \T_{\vc{\alf}{t}} \right\|^2_\V
     +2  \left(\frac1{\lambda n}   \T_{\vc{\alf}{t}} \right) ^T \X \w_\alf \right)
\\& \leq
 \frac1n 
 \sum_{i=1}^n \left(\phi_i^*(-(\vc{\alpha_i}{t}+s(u_i-\vc{\alpha_i}{t})) 
  -\phi_i^*(-\vc{\alpha_i}{t}) \right)
 \\&\quad+ \frac\lambda2 
  \left( 
         \left\|\tfrac1{\lambda n} s(\vu-\vc{\alf}{t}) \right\|^2_\V
     +2  \left(\tfrac1{\lambda n}   s(\vu-\vc{\alf}{t}) \right) ^T \X \w_\alf \right).
%--------------------------------------------------------------
%\\& =
% \frac1n 
% \sum_{i=1}^n  \left(    \phi_i^*(    s(-u_i)
%    +(1-s) (-\vc{\alpha_i}{t})     )   -\phi_i^*(-\vc{\alpha_i}{t}) \right)
% \\&\quad+ \frac\lambda2   \left(        \beta \left\|\tfrac1{\lambda n} s(\vu-\vc{\alf}{t}) \right\|^2      +2  \left(\tfrac1{\lambda n}   s(\vu-\vc{\alf}{t}) \right) ^T X \w_\alf \right)
%--------------------------------------------------------------
\end{align*}
Using $\gamma$-strong convexity of $\phi_i^*$ we have that
\begin{equation}
 \label{eq:stronglyCovnexConjugate}
 \phi_i^*(-(\vc{\alpha_i}{t}+s(u_i-\vc{\alpha_i}{t})) 
  \leq s \phi_i^*(-u_i)
    +(1-s) \phi_i^*(-\vc{\alpha_i}{t})-\tfrac\gamma2 (1-s)s (u_i-\vc{\alpha_i}{t})^2.
\end{equation}
Therefore,
\begin{align*}
\frac nb \Exp[\bD(\vc{\alf}{t})-\bD(\vc{\alf}{t+1})]
& \overset{\eqref{eq:stronglyCovnexConjugate}}{\leq}
 \frac1n 
 \sum_{i=1}^n 
 \left(
    s \phi_i^*(-u_i)
    +su_i \x_i^T\w_{\vc{\alf}{t}}
    -s \phi_i^*(-\vc{\alpha_i}{t})
    -\tfrac\gamma2 (1-s)s (u_i-\vc{\alpha_i}{t})^2
    ) 
 \right)
 \\&\quad+ \frac\lambda2 
  \left( 
         \left\|\tfrac1{\lambda n} s(\vu-\vc{\alf}{t}) \right\|^2_\V       
     +2  \left(\tfrac1{\lambda n}   s(-\vc{\alf}{t}) \right) ^T \X \w_{\vc{\alf}{t}} \right)     
%--------------------------------------------------------------
\\& \overset{\eqref{eq:maximalConvexPropertyImplication}}{\leq}
 \frac sn 
 \sum_{i=1}^n 
 \left(
     -u_i \w_{\vc{\alf}{t}}^T\x_i -  \phi_i(\w_{\vc{\alf}{t}}^Tx_i)
    + u_i \x_i^T\w_{\vc{\alf}{t}}
    -  \phi_i^*(-\vc{\alpha_i}{t})
    ) 
 \right)
 \\&\quad+ \frac\lambda2 
  \left( 
  -\frac{\gamma}{\lambda n}  (1-s)s \|\vu-\alf\|^2
  +
         \left\|\tfrac1{\lambda n} s(\vu-\vc{\alf}{t}) \right\|^2_\V
     +2  \left(\tfrac1{\lambda n}   s(-\vc{\alf}{t}) \right) ^T \X \w_{\vc{\alf}{t}} \right).          
\end{align*}
Substituting the definition of duality gap \eqref{eq:dualityGap} we obtain
\begin{align*}
 \frac nb \Exp[\bD(\vc{\alf}{t})-\bD(\vc{\alf}{t+1})]
 %--------------------------------------------------------------
& \leq
 \frac sn 
 \sum_{i=1}^n 
 \left(
      -  \phi_i(\w_{\vc{\alf}{t}}^T\x_i)    
    -  \phi_i^*(-\vc{\alpha_i}{t})
    - \vc{\alpha_i}{t} \w_{\alf}^T \x_i
    ) 
 \right)
 \\&\quad+ \frac\lambda2 
  \left( 
  -\frac{\gamma}{\lambda n}  (1-s)s \|\vu-\alf\|^2+
         \left\|\tfrac1{\lambda n} s(\vu-\vc{\alf}{t}) \right\|^2_\V
      \right)          
 %--------------------------------------------------------------
\\&=
  - s \calG(\vc{\alf}{t})
  + \frac\lambda2 
  \left( 
         \left\|\tfrac1{\lambda n} s(\vu-\vc{\alf}{t}) \right\|^2_\V
       -\frac{\gamma}{\lambda n}  (1-s)s \|\vu-\alf\|^2       
      \right).                
 %--------------------------------------------------------------
\\&=
  - s \calG(\vc{\alf}{t})
  + \frac1{2 \lambda }
  \left(\frac sn\right)^2
  \left( 
         \left\|(\vu-\vc{\alf}{t}) \right\|^2_\V
       -\frac{\gamma n \lambda (1-s)}{s}   \|\vu-\alf\|^2       
      \right).          
\end{align*}
Multiplying both sides by $-\frac bn$ we obtain \eqref{eq:relationOfDualDecreaseAndDualityGap}.
%\end{proof}

\subsection{Proof of Theorem \ref{thm:dualityGapForLipFunctions}}
%\begin{proof} 
At first let us estimate expected change of dual feasibility.
\begin{align}\nonumber
  \Exp[\vc{\epsilon_D}{t+1}]
  &\overset{\eqref{eq:relationOfDualDecreaseAndDualityGap}}{=}
 \Exp[\bD(\alf^*)-\bD(\vc{\alf}{t+1})]
 =
\Exp[\bD(\alf^*)-\bD(\vc{\alf}{t+1})+\bD(\vc{\alf}{t})-\bD(\vc{\alf}{t})]
%-----------------------------------------------------------------
\\
\nonumber
&
 =
\Exp[\bD(\vc{\alf}{t})-\bD(\vc{\alf}{t+1})+\vc{\epsilon_D}{t}]
%-----------------------------------------------------------------
%\\
%\nonumber
%&
\overset{\eqref{eq:relationOfDualDecreaseAndDualityGap},\eqref{eq:G_bound}}{\leq}
 - b \left (
  \frac {s  }{n} \calG(\vc{\alf}{t})
  - \left(\frac{s}{n}\right)^2
       \frac{1}{ 2  \lambda }
G\right)
+
\Exp[\vc{\epsilon_D}{t}]
%-----------------------------------------------------------------
\\
%\nonumber
& \leq
- b
  \frac {s  }{n} \Exp[\vc{\epsilon_D}{t}]
+ b    \left(\frac{s}{n}\right)^2
       \frac{1}{ 2  \lambda }G 
       +
\Exp[\vc{\epsilon_D}{t}]
%-----------------------------------------------------------------
%\\&
=
(1- b
  \tfrac {s  }{n} )\Exp[\vc{\epsilon_D}{t}]
+ b    \left(\frac{s}{n}\right)^2
       \frac{ }{ 2  \lambda }G. 
         \label{eq:expectedBound}
\end{align} 
From the above follows that
\begin{align}\label{eq:af09wiw90jgoje2qfw2f}
  \Exp[\vc{\epsilon_D}{t}]
%-----------------------------------------------------------------
&\leq  
(1- b
  \tfrac {s  }{n} )^t
  \epsilon_D^{0}
+
b    \left(\frac{s}{n}\right)^2
       \frac{ }{ 2  \lambda }G
\sum_{i=0}^{t-1} (1- b
  \tfrac {s  }{n} )^i
%-----------------------------------------------------------------
%\\ &
\leq  
(1- b
  \tfrac {s  }{n} )^t
  \epsilon_D^{0}
+
    \left(\frac{s}{n}\right)
       \frac{  G}{ 2  \lambda }.
 %    \nonumber
\end{align} 
Choice of 
$s=1$ and $t= t_0:= \max\{0,\lceil \tfrac nb \log(2\lambda n \vc{\epsilon_D}{0} / (G )) \rceil\}$
will lead to 
\begin{align}\label{eq:induction_step1}
  \Exp[\epsilon_D^{t_0}]
%-----------------------------------------------------------------
 &\leq  
(1- 
  \tfrac {b  }{n} )^{t_0}
  \vc{\epsilon_D}{0}
+
     \frac{s}{n} 
       \frac{  G}{ 2  \lambda }
\leq  
\frac{G }{2\lambda n \vc{\epsilon_D}{0}} \vc{\epsilon_D}{0}
+
    \frac{1}{n}
       \frac{  G}{ 2  \lambda }
=
       \frac{  G}{ \lambda n   }.
\end{align} 
Following the proof in \cite{ShalevShawartzZhang} we are now going to show that
\begin{equation}\label{eq:expectationOfDualFeasibility}
\forall t\geq t_0 :  \Exp[\vc{\epsilon_D}{t}] \leq \frac{2   G}{\lambda (2n+b(t-t_0))}.
\end{equation}
Clearly, \eqref{eq:induction_step1} implies that \eqref{eq:expectationOfDualFeasibility} holds for $t=t_0$.
Now imagine that it holds for any $t\geq t_0$ then we show that it also has to hold for $t+1$. Indeed, using $s=\frac{2n}{2n+b(t-t_0)}$ we obtain
\begin{align}
\Exp[\epsilon_D^{(t+1)}]
&\overset{\eqref{eq:expectedBound}}{\leq}
(1- b
  \tfrac {s  }{n} )\Exp[\vc{\epsilon_D}{t}]
+ b    \left(\frac{s}{n}\right)^2
       \frac{1}{ 2  \lambda }G
\nonumber%--------------------------------------------------
\\
&\overset{\eqref{eq:expectationOfDualFeasibility}}{\leq}
(1- b
  \tfrac {s  }{n} ) \frac{2   G}{\lambda (2n+b(t-t_0))}
+ b    \left(\frac{s}{n}\right)^2
       \frac{1}{ 2  \lambda }G
\nonumber%--------------------------------------------------
\\
&=
(1- b
  \frac{2}{2n+b(t-t_0)} ) \frac{2   G}{\lambda (2n+b(t-t_0))}
+ b    \left(\frac{2}{2n+b(t-t_0)}\right)^2
       \frac{1}{ 2  \lambda }G   
\nonumber%--------------------------------------------------
\\
&=
\frac{2G }{\lambda}
\left(
\frac{1}{2n+b(t-t_0)+b}
\right)       
\left(
\frac{2n+b(t-t_0)+b}{1}
\right)       
\left(
\frac{2n+b(t-t_0)-b}{(2n+b(t-t_0))^2}
\right)       
\nonumber%--------------------------------------------------
\\
&=
\frac{2G }{\lambda (2n+b(t-t_0)+b)}
\frac{(2n+b(t-t_0)+b)(2n+b(t-t_0)-b)}{(2n+b(t-t_0))^2}
\nonumber%--------------------------------------------------
\\
&\leq \label{eq:fa2f2ffvgafda}
\frac{2G }{\lambda (2n+b(t-t_0)+b)}.
\end{align}
 In the last inequality we have used the fact that geometric mean
 is less or equal to arithmetic mean. 
If $\bar \alf$ is defined as \eqref{eq:averageOfAlphaDefinition}
then we obtain that
\begin{align}
\Exp[\calG(\bar\alf)] &=  
 \Exp\left[\calG\left(\sum_{t=T_0}^{T-1} \tfrac1{T-T_0} \vc{\alf}{t}\right)\right]
 \leq
  \tfrac1{T-T_0} \Exp\left[\sum_{t=T_0}^{T-1} \calG\left( \vc{\alf}{t}\right)\right]
\nonumber  
%-----------------------------------------------------
\\&\overset{\eqref{eq:relationOfDualDecreaseAndDualityGap}}{\leq}
  \tfrac1{T-T_0} \Exp\left[\sum_{t=T_0}^{T-1} 
    \left(  -\frac ns \frac1b \Exp[\bD(\vc{\alf}{t})-\bD(\vc{\alf}{t+1})]
 + \left(\frac{s}{n}\right)
       \frac{1}{ 2  \lambda }
  \left( \tfrac1n
        \left\|  \vu_t-\vc{\alf}{t}  \right\|_\V^2
      \right)
\right)\right]
\nonumber
%-----------------------------------------------------
\\&\overset{\eqref{eq:G_bound}}{\leq}
  \frac ns \frac1b \frac1{T-T_0}    
    \left(   \Exp[\bD(\vc{\alf}{T})] -\Exp[\bD(\vc{\alf}{T_0})]
\right) 
+ \frac{s}{n}
       \frac{G  }{ 2  \lambda } 
\nonumber       
%-----------------------------------------------------
\\&\leq
  \frac ns \frac1b \frac1{T-T_0}    
    \left(  \bD(\alf^*) - \Exp[\bD(\vc{\alf}{T_0})] 
\right) 
+ \frac{s}{n}
       \frac{G  }{ 2  \lambda }. 
\label{eq:asfdj09iuoi2ej4fr3}       
\end{align}
Now, if $T\geq \lceil\frac nb\rceil+T_0$ such that $T_0\geq t_0$
we obtain
\begin{align*}
\Exp[\calG(\bar\alf)] 
%-----------------------------------------------------
&\overset{\eqref{eq:expectationOfDualFeasibility}}{\leq}
  \frac ns \frac1b \tfrac1{T-T_0}    
    \left( \frac{2   G}{\lambda (2n+b(T_0-t_0))}
\right) 
+ \frac{s}{n}
       \frac{G  }{ 2  \lambda } 
%-----------------------------------------------------
\\&=
\frac{  G}{\lambda}
\left(
  \frac ns  \frac1{b(T-T_0)}    
    \left( \frac{2 }{(2n+b(T_0-t_0))}
\right) 
+ \frac{s}{2n}        
\right).       
\end{align*}
Using $s=\frac{n}{b(T-T_0)}$ we obtain that
\begin{align*}
\Exp[\calG(\bar\alf)] 
&\leq
\frac{  G}{b \lambda}
\left(
    \frac{2 }{2\frac nb+ (T_0-t_0)}
+ \frac{1}{2(T-T_0)}
\right).       
\end{align*}
To have this quantity   $\leq \epsilon_\calG$ we obtain that $T,t_0,T_0$ has to satisfy   \eqref{eq:dualityRequirements}.
The fact that 
$T_0 \geq  t_0+
\frac{1}b
\left(
\frac{4 G} { \lambda \epsilon_\calG} -2 n \right)_+$
implies that right-hand site of \eqref{eq:fa2f2ffvgafda}
is $\leq \epsilon_\calG$.
%\end{proof}

\subsection{Proof of Theorem \ref{thm:dualityGapForSmooth}}
%\begin{proof}
If function $\phi_i$ is $(1/\gamma)$-smooth then $\phi_i^*$ is $\gamma$-strongly convex.
If we plug $s=\tilde s=\frac{\lambda n \gamma }{  \|\vv\|_\infty+\lambda n \gamma } \in (0,1)$ 
into \eqref{eq:defOfG}
we obtain that $\forall t: \vc{G}{t} \leq 0$.
Hence \eqref{eq:relationOfDualDecreaseAndDualityGap}
will read as follows
\begin{align}\label{eq:gdceef23fr}
 \Exp[\bD(\vc{\alf}{t+1})-\bD(\vc{\alf}{t})]
\geq
  b 
  \frac {\tilde s  }{n} \calG(\vc{\alf}{t})
  =
    b 
   \frac{\lambda   \gamma }{  \|\vv\|_\infty+\lambda n \gamma }    \calG(\vc{\alf}{t})
  \geq  
    b 
   \frac{\lambda   \gamma }{  \|\vv\|_\infty+\lambda n \gamma }    (\bD(\alf^*)-\bD(\vc{\alf}{t})).
  \end{align}
Using the fact that
$\Exp[\bD(\vc{\alf}{t+1})-\bD(\vc{\alf}{t})]
=\Exp[\bD(\vc{\alf}{t+1})-\bD(\alf^*)]
+\bD(\alf^*)-\bD(\vc{\alf}{t})
$
we have 
\begin{align}\label{eq:smoothDecreaseInExpectation}
\Exp[\bD(\alf^*)-\bD(\vc{\alf}{t+1})]
  \leq  
  \left(1 - b 
   \frac{\lambda   \gamma }{  \|\vv\|_\infty+\lambda n \gamma }   
   \right)(\bD(\alf^*)-\bD(\vc{\alf}{t})).
   \end{align}
Therefore if we denote by $\vc{\epsilon_D}{t} = \bD(\alf^*)-\bD(\vc{\alf}{t})$
we have that
\begin{align*}
 \Exp[\vc{\epsilon_D}{t}] \leq   \left(1 - b 
   \frac{\lambda   \gamma }{  \|\vv\|_\infty+\lambda n \gamma }   
   \right)^t \vc{\epsilon_D}{0}
\overset{\eqref{eq:boundOnD}}{\leq}
\left(1 - b 
   \frac{\lambda   \gamma }{  \|\vv\|_\infty+\lambda n \gamma }   
   \right)^t
\leq \exp\left(-b t
   \frac{\lambda   \gamma }{  \|\vv\|_\infty+\lambda n \gamma }    \right).
\end{align*}
Right hand site will be smaller than some $\epsilon_D$ if 
$$
 t   
    \geq 
    \frac{  \|\vv\|_\infty}b
    \left(
       \frac1{\lambda   \gamma }
    +
    \frac{n}   {  \|\vv\|_\infty}
    \right)
    \log \frac1{\epsilon_D}.
$$
Moreover, to bound the duality gap we have
\begin{align}\label{eq:fca9fiuipojf}
b 
   \frac{\lambda   \gamma }{  \|\vv\|_\infty+\lambda n \gamma }    \calG(\vc{\alf}{t}) 
\overset{\eqref{eq:gdceef23fr}}{\leq}
 \Exp[\vc{\epsilon_D}{t}-\vc{\epsilon_D}{t+1}]   
 \leq \vc{\epsilon_D}{t}.
  \end{align}
Therefore  $\calG(\vc{\alf}{t})\leq    \frac{  \|\vv\|_\infty+\lambda n \gamma }    {b\lambda   \gamma }    \vc{\epsilon_D}{t}$.  
Hence if $\epsilon_D \leq 
\frac{b\lambda   \gamma } {  \|\vv\|_\infty+\lambda n \gamma }   \epsilon_\calG $
then $\calG(\vc{\alf}{t})\leq \epsilon_\calG$.
Therefore
after 
$$
 t   
    \geq 
    \frac{  \|\vv\|_\infty}b
    \left(
       \frac1{\lambda   \gamma }
    +
    \frac{n}   {  \|\vv\|_\infty}
    \right)
    \log \left(
    \frac{  \|\vv\|_\infty}b
     \left(
     \frac1 {\lambda   \gamma }
    +
    \frac{n }  {  \|\vv\|_\infty}
    \right)
    \frac1{\epsilon_\calG}
    \right).
$$
iterations we have duality gap less than $\epsilon_\calG$ and the first part of the proof is done.
To show the second part of Theorem let us sum 
\eqref{eq:fca9fiuipojf} over $t=T_0,\dots,T-1$ to obtain
\begin{align}\label{eq:said9fipdsaf}
\Exp\left[
\frac1{T-T_0} \sum_{t=T_0}^{T-1} \calG(\vc{\alf}{t})
\right] 
\leq
   \frac{  \|\vv\|_\infty+\lambda n \gamma }{b\lambda   \gamma }
   \frac1{T-T_0} \Exp[\bD(\vc{\alf}{T})-\bD(\vc{\alf}{T_0})].
\end{align}
Now, if we choose
$\bar \w, \bar \alf$ to be either average vectors or a randomly chosen vector over $t\in\{T_0+1,\dots,T\}$, then we have
\begin{align*}
\Exp[\calG(\bar \alf)]
\overset{\eqref{eq:said9fipdsaf}}{\leq}
   \frac{  \|\vv\|_\infty+\lambda n \gamma }{b\lambda   \gamma }
   \frac1{T-T_0} \Exp[\bD(\vc{\alf}{T})-\bD(\vc{\alf}{T_0})]
\leq
   \frac{  \|\vv\|_\infty+\lambda n \gamma }{b\lambda   \gamma }
   \frac1{T-T_0} \Exp[\bD(\alpha^*)-\bD(\vc{\alf}{T_0})]. 
   \end{align*}
Hence to have $\Exp[\calG(\bar \alf)]\leq \epsilon_\calG$
it is sufficient to choose
$$
\Exp[\vc{\epsilon_D}{T_0}]\leq 
   \frac{b\lambda   \gamma }{  \|\vv\|_\infty+\lambda n \gamma }
   (T-T_0) \epsilon_\calG.
$$
Therefore we need $T_0$ to satisfy
$$
 T_0   
    \geq 
    \frac{  \|\vv\|_\infty}b
    \left(
       \frac1{\lambda   \gamma }
    +
    \frac{n}   {  \|\vv\|_\infty}
    \right)
    \log 
    \left(
    \frac{  \|\vv\|_\infty}b
    \left(
    \frac{1}{\lambda   \gamma }
    +
    \frac{n }{  \|\vv\|_\infty}
    \right) \frac{1}{(T-T_0)\epsilon_\calG}
    \right).
$$
To get a high probability result we use Lemma  \ref{l:randomVariableTrick}
with 
$\vc{\xi}{t}=\bD(\alf^*)-\bD(\vc{\alf}{t})$, 
$c_2 = \frac{  \|\vv\|_\infty+\lambda n \gamma }   {b \lambda   \gamma } $ (see \eqref{eq:smoothDecreaseInExpectation}) and 
$\epsilon=\frac{b\lambda   \gamma }{  \|\vv\|_\infty+\lambda n \gamma }    \epsilon_\calG$
to obtain that
after 
$$\tilde T= c_2 \log\left(\frac{\vc{\xi}{0}}{\epsilon\rho}\right)
  \overset{\mbox{Lemma}\ \ref{lemma:basicValueBound}}{\leq} \frac{  \|\vv\|_\infty+\lambda n \gamma }   {b \lambda   \gamma }
  \log\left(\frac{1}{\epsilon\rho}\right)
$$
$$
1-\rho \leq \Prob\left( \bD(\alf^*)-\bD(\vc{\alf}{\tilde T}) \leq \epsilon\right)
\overset{\eqref{eq:fca9fiuipojf}}{\leq}
\Prob\left( \frac{b\lambda   \gamma }{  \|\vv\|_\infty+\lambda n \gamma }    \calG(\vc{\alf}{t})   \leq \epsilon\right)
=
\Prob\left(  \calG(\vc{\alf}{t})   \leq \epsilon_\calG\right).
$$

%\end{proof}

  \end{document}